\title{Practical Algorithms for Best-K Identification in\\ Multi-Armed Bandits}
\author[1]{Haotian Jiang}
\author[2]{Jian Li}
\author[2]{Mingda Qiao}
\affil[1]{Department of Physics, Tsinghua University}
\affil[2]{Institute for Interdisciplinary Information Sciences, Tsinghua University}
\newtheorem{lemma}{Lemma}[section]
\newtheorem{theorem}{Theorem}[section]
\newcommand{\High}{\mathrm{High}}
\newcommand{\Low}{\mathrm{Low}}
\newcommand{\eps}{\epsilon}
\newcommand{\empmean}[2]{\widehat\mu_{#1, #2}}  
\newcommand{\muhat}{\widehat\mu}
\newcommand{\mutil}{\widetilde\mu}
\newcommand{\event}{\mathcal{E}}
\newcommand{\ceps}{c_{\eps}}
\newcommand{\E}{\textbf{E}}
\newcommand{\Gap}[1]{\Delta_{[#1]}} 
\newcommand{\Mean}[1]{\mu_{[#1]}}   
\newcommand{\inst}{\mathcal{I}}     
\newcommand{\critical}{\mathrm{cr}} 
\newcommand{\argmax}{\operatorname*{argmax}}
\newcommand{\argmin}{\operatorname*{argmin}}
\newcommand{\pr}[1]{\Pr\left[#1\right]}
\newcommand{\OPT}{\mathrm{OPT}}     
\newcommand{\OPTc}{\overline{\OPT}}     
\newcommand{\Ind}[1]{\mathbbm{1}\left\{#1\right\}}       
\newcommand{\symdiff}{\bigtriangleup}   
\newcommand{\width}{\mathrm{width}}
\newcommand{\bestarm}{\textsc{Best-1-Arm}}
\newcommand{\topk}{\textsc{Best-$K$-Arm}}
\newcommand{\CLUCB}{\textsf{CLUCB}}
\newcommand{\Klil}{\textsf{$K$-lil'UCB}}
\newcommand{\lilCLUCB}{\textsf{lil'CLUCB}}
\newcommand{\lilLUCB}{\textsf{lil'LUCB}}
\newcommand{\lilUCB}{\textsf{lil'UCB}}
\newcommand{\LUCB}{\textsf{LUCB}}
\newcommand{\LUCBp}{\textsf{LUCB++}}
\newcommand{\LUCBr}{\textsf{lil'RandLUCB}}
\newcommand{\ExactExpGap}{\textsf{Exact-ExpGap}}
\newcommand{\eat}[1]{}
\begin{document}

\maketitle

\begin{abstract}
    In the Best-$K$ identification problem ($\topk$),
    we are given $N$ stochastic bandit arms with unknown reward distributions.
    Our goal is to identify the $K$ arms with the largest means with high confidence,
    by drawing samples from the arms adaptively.
    This problem is motivated by various practical applications
    and has attracted considerable attention in the past decade.
    In this paper, we propose new practical algorithms for the $\topk$ problem,
    which have nearly optimal sample complexity bounds
    (matching the lower bound up to logarithmic factors)
    and outperform the state-of-the-art algorithms 
    for the $\topk$ problem (even for $K=1$) in practice.
\end{abstract}

\section{Introduction}
	The stochastic multi-armed bandit models the situation where
	an agent seeks a balance between exploration and exploitation
	in the face of uncertainty in the environment.
	In the multi-armed bandit model,
	we are given $N$ bandit arms,
	each associated with a reward distribution with an unknown mean.
	Each pull of an arm results in a reward sampled independently from the corresponding reward distribution.
	The agent is then asked to meet a specific objective by pulling the arms adaptively.
	
	In the $\topk$ problem, the objective is to identify the $K$ arms with the largest means with high confidence, while minimizing the number of samples.
	$\topk$ has attracted significant attention
	and has been extensively studied in the past decade~\cite{kalyanakrishnan2010efficient,gabillon2011multi,gabillon2012best,kalyanakrishnan2012pac,bubeck2013multiple,kaufmann2013information,zhou2014optimal,kaufmann2015complexity,chen2016pure,simchowitz2017simulator,chen2017nearly}.

	On the theoretical side, sample complexity bounds of $\topk$
	have been obtained and refined in a series of recent work~\cite{gabillon2011multi,kalyanakrishnan2012pac,chen2014combinatorial,chen2016pure,chen2017nearly}.
	Recently, Chen et al.~\cite{chen2017nearly} obtained
	nearly tight bounds for $\topk$.
	They proposed an elimination-based algorithm that
	matches their new sample complexity lower bound
	(up to doubly-logarithmic factors) on any $\topk$ instance
	with Gaussian reward distributions.

	In practice, however, elimination-based algorithms tend to
	have large hidden constants in their complexity bounds,
	and may not be efficient enough
	unless the parameters are carefully tuned.%
	\footnote{
		There are typically several parameters in elimination-based algorithms.
		However, there is no principled method yet for tuning those parameters.		 
	}
	On the other hand, most existing practical $\topk$ algorithms
	(e.g., \cite{kalyanakrishnan2012pac,chen2014combinatorial})
	are based on the upper confidence bounds,
	which are originally applied to the regret-minimization multi-armed bandit problem
	by Auer et al.~\cite{auer2002finite}.
	Most of these algorithms contain an $O(H\log H)$ term in their sample complexity bounds,
	where $H = \sum_{i \in \inst}\Delta_i^{-2}$
	is the complexity measure of instance $\inst$.
	Here, we let $\Mean{j}$ denote the $j$-th largest mean among all arms, and define
		$\Delta_i = \begin{cases}
			\mu_i - \Mean{K+1}, & \mu_i\ge\Mean{K}\\
			\Mean{K} - \mu_i, & \mu_i\le\Mean{K+1}
		\end{cases}$
	as the \emph{gap} of the arm $i$ with mean $\mu_i$.
	In the worst case, these sample complexity upper bounds
	may exceed the best known theoretical bound in~\cite{chen2017nearly}
	by a factor of $\log H$, which is much larger than $\log N$
	if the gaps are small.
	Thus, it remains unclear whether there are $\topk$ algorithms
	that simultaneously achieve near-optimal sample complexity bounds
	and desirable practical performances.

	For the $\bestarm$ problem (the special case with $K = 1$),
	Jamieson et al.~\cite{jamieson2014lil} proposed the \lilUCB{} algorithm
	based on a finite form of the law of iterated logarithm (LIL, see Lemma~\ref{lem1}). 
	Later, Jamieson et al.~\cite{jamieson2014best} proposed another algorithm,
	\lilLUCB{}, by combining the LIL confidence bound and
	the \LUCB{} algorithm~\cite{kalyanakrishnan2012pac}.
	\lilLUCB{} was reported to achieve state-of-the-art performance on \bestarm{} in practice.%
	\footnote{
		Although \lilLUCB{} was proposed for \bestarm{},
		the algorithm can be easily generalized to the \topk{} problem.
	}

	In this paper, we propose two new algorithms for the $\topk$ problem, 
	$\LUCBr$ and $\lilCLUCB$, based on the LIL confidence bound
	and several previous ideas in~\cite{kalyanakrishnan2012pac,chen2014combinatorial},
	together with some additional new tricks that further improve the practical performance.%
	\footnote{
		The obtained algorithms still have provable theoretical guarantees.
	}
	We summarize our contributions as follows. 
	
	\paragraph{Algorithms for \topk{}.} We 
	propose two new \topk{} algorithms, \LUCBr{} and \lilCLUCB{}.
	Both algorithms achieve a sample complexity of
	\[
		O\left(\sum_{i\in\inst}\Delta_i^{-2}\left(\log\delta^{-1} + \log N + \log\log\Delta_i^{-1}\right)\right),
	\]
	which matches the $\Omega(H\log\delta^{-1})$ lower bound~\cite[Theorem 2]{chen2014combinatorial}
	up to $\log N$ and $\log\log\Gap{i}^{-1}$ factors.

	\paragraph{Extension to combinatorial pure exploration.}
	The \emph{combinatorial pure exploration} (CPE) problem,
	originally proposed in~\cite{chen2014combinatorial},
	generalizes the cardinality constraint in $\topk$
	(i.e., to choose a subset of arms of size exactly $K$)
	to general combinatorial structures.
	We show that the \lilCLUCB{} algorithm can be extended to the CPE problem,
	and obtain an improved sample complexity upper bound for CPE.

	\paragraph{Better practical performance.}
		We conducted extensive experiments to compare
		our new algorithms with existing \topk{} algorithms
		in terms of practical performance. 
		On various instances, \LUCBr{} takes significantly fewer samples
		than previous $\topk$ algorithms do,
		especially as the number of arms increases.
		Moreover, \LUCBr{} outperforms the state-of-the-art \lilLUCB{} algorithm~\cite{jamieson2014best} in the \bestarm{} problem.

		We remark that in a very recent work,
		Simchowitz et al.~\cite{simchowitz2017simulator},
		independently of our work,
		proposed the \LUCBp{} algorithm for \topk{} based on very similar ideas
		(see Footnote~\ref{footnote:remark} in Section~\ref{sec:lucbr} for more details). 
		Our experimental results
		indicate that our algorithm \LUCBr{} consistently outperforms \LUCBp{} in both \topk{} and \bestarm{}.

	\subsection{Related Work}
		A well-studied special case of \topk{} is the \bestarm{} problem,
		in which we are only required to identify the single arm with the largest mean.
		Although \bestarm{} has an even longer history dating back to the last century,
		understanding the exact sample complexity of \bestarm{} remains open
	     and continues to attract significant attention.
		Nearly tight sample complexity bounds as well as practical algorithms for \bestarm{}
		were obtained in a series of recent work~\cite{mannor2004sample,even2006action,audibert2010best,karnin2013almost,jamieson2014lil,jamieson2014best,chen2015optimal,carpentier2016tight,garivier2016optimal,chen2016towards}.

		In this work, we restrict our attention to algorithms that identify the \emph{exact} optimal subset. Alternatively, in the \emph{probably approximately correct learning} (PAC learning) setting, it suffices to find an approximate solution to the pure exploration problem. The sample complexity of \bestarm{}, \topk{}, and the general combinatorial pure exploration in the PAC learning setting has also been extensively studied~\cite{mannor2004sample,even2006action,kalyanakrishnan2010efficient,kalyanakrishnan2012pac,zhou2014optimal,cao2015top,chen2016pure}.

		We also focus on the problem of
		identifying the top $K$ arms with given confidence
		using as few samples as possible.
		This is known as the \emph{fixed confidence setting}
		in the bandit literature.
		In contrast, the \emph{fixed budget setting},
		initiated by \cite{audibert2010best}, models the situation that
		the number of samples (i.e., the budget) is fixed in advance,
		while the objective is to minimize the risk of outputing an incorrect answer.
		The fixed budget setting of the pure exploration problem in multi-armed bandits
		have also been thoroughly studied recently~\cite{gabillon2011multi,bubeck2013multiple,bubeck2012regret,gabillon2012best,karnin2013almost,chen2014combinatorial,kaufmann2015complexity,carpentier2016tight,gabillon2016improved}.

\section{Preliminaries}
	\paragraph{Notations.} For an instance $\inst$ of the $\topk$ problem, $\mu_i$ denotes the mean of arm with index $i$, while $\Mean{i}$ denotes the $i$-th largest mean in $\inst$. The gap of arm $i$ is defined as
	\[
		\Delta_i = \begin{cases}
			\mu_i - \Mean{K+1}, & \mu_i\ge\Mean{K},\\
			\Mean{K} - \mu_i, & \mu_i\le\Mean{K+1}\text{,}
		\end{cases}
	\]
	and $\Gap{i}$ denotes the gap of the arm with the $i$-th highest mean.
	$H=\sum_{i\in\inst}\Delta_i^{-2}$ is the complexity measure of the instance $\inst$.
	For a specific instance,
	we let $\OPT$ denote the set of the $K$ arms with the largest means,
	while $\OPTc$ stands for $\inst\setminus\OPT$.

	Throughout this paper, $\log$ stands for the natural logarithm. $\Ind{A}$ denotes the indicator random variable for event $A$.

	\paragraph{Finite Form of LIL.}
	The following lemma is a finite form of the law of iterated logarithm~\cite[Lemma~3]{jamieson2014lil}.

	\begin{lemma}[Finite Form of LIL]\label{lem1}
	\label{lem: LIL}
		Let $X_1,X_2,\ldots $ be i.i.d. centered $\sigma$-sub-Gaussian random variables,
		i.e., for any $t\in\mathbb{R}$ and $i\in\mathbb{N}$,
			$\E\left[e^{tX_i}\right]\leq e^{\sigma^2t^2/2}$.
		Then for any $\epsilon\in (0,1)$ and $\delta\in (0,\log (1+\epsilon)/e)$,
		with probability at least $1-c_\epsilon\delta^{1+\epsilon}$,
			\[\frac{1}{t}\sum_{s=1}^t X_s \leq U(t,\delta)\]
		holds for all $t \ge 1$.
		Here, 
		\[
			U(t,\omega)=(1+\sqrt{\epsilon})\sqrt{\frac{2\sigma^2(1+\epsilon)}{t}\log\left[\frac{\log ((1+\epsilon)t)}{\omega}\right]}
		\]
		and
		\[
			c_\epsilon=\frac{2+\epsilon}{\epsilon}\left[\frac{1}{\log(1+\epsilon)}\right]^{1+\epsilon}\text{.}
		\]
	\end{lemma}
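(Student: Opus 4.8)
The plan is to establish the finite-form LIL bound via a union bound over a geometric grid of time scales, combined with a maximal inequality for sub-Gaussian martingales on each scale. The core difficulty is that we want a bound holding simultaneously for \emph{all} $t \ge 1$, not just a single fixed $t$; a naive union bound over every individual $t$ would diverge. The standard remedy is \emph{peeling}: partition the time axis into geometrically growing blocks $[\eta^k, \eta^{k+1})$ for a suitable base $\eta > 1$ (here $\eta = 1+\epsilon$ is the natural choice, given the form of $U$), control the supremum of the partial sum within each block by a single event, and then sum the resulting failure probabilities across blocks.

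\medskip

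First I would fix the block structure. For each integer $k \ge 0$, set $t_k = (1+\epsilon)^k$ and consider the block $B_k = \{t : t_k \le t < t_{k+1}\}$. On $B_k$ I would apply a maximal inequality: for centered $\sigma$-sub-Gaussian i.i.d. increments, the process $S_t = \sum_{s=1}^t X_s$ admits a sub-Gaussian maximal bound of the form
\[
\pr{\max_{t \le m} S_t \ge a} \le \exp\left(-\frac{a^2}{2\sigma^2 m}\right),
\]
obtained from the Chernoff method applied to the submartingale $e^{\lambda S_t}$ via Doob's maximal inequality and optimizing over $\lambda$. Within block $B_k$ the normalized deviation $\frac1t S_t \le U(t,\delta)$ is hardest to violate at the left endpoint (small $t$, large $U$) versus the right endpoint (large sum), so the key step is to choose the threshold $a = a_k$ so that $a_k / t \le U(t,\delta)$ holds for every $t \in B_k$ while keeping $a_k^2/(2\sigma^2 t_{k+1})$ large enough. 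The factor $(1+\sqrt{\epsilon})$ and the slack $(1+\epsilon)$ inside the square root of $U$ are precisely the cushions that absorb the mismatch between evaluating $U$ at the left endpoint $t_k$ and bounding $S_t$ by its value at the right endpoint $t_{k+1}$.

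\medskip

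Next I would bound the failure probability on block $B_k$. Plugging the chosen $a_k$ into the maximal inequality, the exponent should reduce to something like $-(1+\epsilon)\log[\log((1+\epsilon)t_k)/\delta]$, which gives a per-block failure probability of order $\left(\delta / \log((1+\epsilon)t_k)\right)^{1+\epsilon}$. Since $t_k = (1+\epsilon)^k$, we have $\log((1+\epsilon)t_k) = (k+1)\log(1+\epsilon)$, so the $k$-th failure probability is roughly $\delta^{1+\epsilon}\left((k+1)\log(1+\epsilon)\right)^{-(1+\epsilon)}$. Summing over $k \ge 0$ yields a convergent series (since the exponent $1+\epsilon > 1$), namely
\[
\sum_{k=0}^{\infty} \left((k+1)\log(1+\epsilon)\right)^{-(1+\epsilon)} = \left[\frac{1}{\log(1+\epsilon)}\right]^{1+\epsilon} \sum_{k=1}^{\infty} k^{-(1+\epsilon)},
\]
and the tail sum $\sum_{k\ge 1} k^{-(1+\epsilon)}$ is bounded by $1 + \int_1^\infty x^{-(1+\epsilon)}\,\dd x = 1 + 1/\epsilon = (1+\epsilon)/\epsilon$, which matches the leading constant in $c_\epsilon = \frac{2+\epsilon}{\epsilon}\left[\frac{1}{\log(1+\epsilon)}\right]^{1+\epsilon}$ up to the numerator ($2+\epsilon$ versus $1+\epsilon$, the extra $1$ coming from bounding the integral more loosely).

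\medskip

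\textbf{The main obstacle} will be the bookkeeping in the second step: verifying that a \emph{single} threshold $a_k$ simultaneously satisfies $a_k \le t\, U(t,\delta)$ for all $t$ in the block while producing the clean exponent $(1+\epsilon)\log[\cdots]$ after substitution. This requires tracking how $U(t,\delta)$ varies across the factor-$(1+\epsilon)$ range of $t$ within $B_k$ and checking that the constants $(1+\sqrt{\epsilon})$ and $(1+\epsilon)$ provide exactly enough slack; this is where the specific algebraic form of $U$ is essential and where the condition $\delta < \log(1+\epsilon)/e$ enters (it guarantees $\log((1+\epsilon)t_k)/\delta > 1$ so that the logarithm inside $U$ is positive and the exponent has the correct sign for every block). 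Once the per-block exponent is pinned down, the summation and the identification of $c_\epsilon$ are routine. Since this lemma is quoted verbatim from \cite{jamieson2014lil}, I would ultimately cite their proof rather than reproduce the full peeling computation, but the sketch above is the argument I would carry out to verify it independently.
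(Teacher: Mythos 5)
The paper gives no proof of this lemma at all: it is imported verbatim from Jamieson et al.~\cite{jamieson2014lil} (their Lemma~3), so the paper's ``approach'' is exactly the citation you end with. Your peeling sketch --- geometric blocks $t_k=(1+\epsilon)^k$, the Chernoff/Doob maximal inequality on each block with threshold $a_k = t_k\,U(t_k,\delta)$ (so that the exponent becomes $(1+\sqrt{\epsilon})^2\log\bigl[\log((1+\epsilon)t_k)/\delta\bigr] \ge (1+\epsilon)\log\bigl[\log((1+\epsilon)t_k)/\delta\bigr]$ after the $t_{k+1}/t_k=1+\epsilon$ ratio is absorbed), followed by summing $\bigl((k+1)\log(1+\epsilon)\bigr)^{-(1+\epsilon)}$ over $k$ --- is a faithful reconstruction of the argument in that reference, and the slightly better constant it yields (numerator $1+\epsilon$ rather than $2+\epsilon$) only strengthens the stated bound, so it is fully consistent with the lemma as quoted.
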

	For brevity, we adopt the notations $U(t,\omega)$ and $c_\epsilon$ throughout the paper.

\section{\LUCBr{}}\label{sec:lucbr}
\LUCBr{} is inspired by the \LUCB{} algorithm~\cite{kalyanakrishnan2012pac}
and the LIL confidence bound~\cite{jamieson2014lil}.
The algorithm uses a time-independent confidence radius based on LIL (Lemma~\ref{lem1}),
whereas the confidence radius of \LUCB{} is time-dependent.
\LUCBr{} also draws samples more efficiently than \LUCB{}
by taking advantage of a novel randomized sampling rule.

\LUCBr{} starts by sampling each arm once
and proceeds similarly as \LUCB{} does.
Let $T_i(t)$ denote the number of samples drawn from arm $i$ up to time $t$,
and let $\muhat_{i,T_i(t)}$ be the empirical mean of arm $i$ at time $t$.
In each iteration, \LUCBr{} computes the confidence radius for each arm:
inspired by \LUCBp{}, we choose a confidence radius of
    $u_{i,T_i(t)} = U(T_i(t),\delta/2(N-K))$
for each arm in $\High_t$,
the set of the $K$ arms with the highest empirical means at time $t$.
A confidence radius of
    $u_{i,T_i(t)} = U(T_i(t),\delta/2K)$
is used for arms in $\Low_t = \inst \setminus \High_t$.%
\footnote{\label{footnote:remark}
    In a preliminary version of \LUCBr{}, 
    we set the confidence radius of each arm $i$ to be 
    $U(T_i(t),\delta/N)$.
    Our current modification is inspired by the work of~\cite{simchowitz2017simulator}.
    In fact, their algorithm \LUCBp{}
    is based on a very similar idea of combining \LUCB{} and LIL confidence bound
    without the randomized sampling rule.
    Note that the modification slightly improves the practical performance in 
    certain cases, but our theoretical upper bound and all proofs remain the same.
}
The upper confidence bound (UCB) for each arm $i \in \Low_t$ is then computed as
    $\empmean{i}{T_i(t)} + u_{i, T_i(t)}$,
while the lower confidence bound (LCB) of arm $i \in \High_t$ is given by
    $\empmean{i}{T_i(t)} - u_{i, T_i(t)}$.

The algorithm terminates and outputs $\High_t$
if the lowest LCB in $\High_t$ (achieved by arm $h_t$)
is greater than or equal to the highest UCB in $\Low_t$ (achieved by $l_t$).
Otherwise, we apply the following sampling rule:
instead of sampling both $h_t$ and $l_t$ as in \LUCB{} and \LUCBp{},
we sample $h_t$ with probability $T_{l_t}(t)/(T_{h_t}(t)+T_{l_t}(t))$,
and sample $l_t$ otherwise.
In other words, we tend to pull the arm from which we have drawn fewer samples.%
\footnote{
    The idea of pulling the arm from which fewer samples have been drawn
    also appears in the previous work of Gabillon et al.~\cite{gabillon2012best}.
    Our preliminary experiments show that their deterministic sampling rule
    is less efficient in practice.
}

The \LUCBr{} algorithm is formally defined in Algorithm~\ref{alg:Rand-LUCB}.

\begin{algorithm2e}[h]
\caption{$\LUCBr(\inst,K,\delta)$}
\label{alg:Rand-LUCB}
Initialization: Sample each arm once, and let $T_i(N)\gets 1$ for each $i\in\inst$\;
\For{$t=N,N+1,...$} {
    $\High_t$ $\gets$ $K$ arms with the highest empirical means\;
    $\Low_t \gets \inst \setminus \High_t$\;
    \lFor{$i \in \High_t$} {
        $u_{i,T_i(t)} \gets U\left(T_i(t), \delta/[2(N-K)]\right)$
    }
    \lFor{$i \in \Low_t$} {
        $u_{i,T_i(t)} \gets U\left(T_i(t), \delta/(2K)\right)$
    }
    $h_t\gets\argmin_{i\in\textrm{High}_t}\empmean{i}{T_i(t)}-u_{i,T_i(t)}$\;
    $l_t\gets\argmax_{i\in\textrm{Low}_t}\empmean{i}{T_i(t)}+u_{i,T_i(t)}$\;
    \lIf{$\muhat_{h_t,T_{h_t}(t)}-u_{h_t,T_{h_t}(t)}\geq \muhat_{l_t,T_{l_t}(t)}+u_{l_t,T_{l_t}(t)}$} {
        \Return $\High_t$
    }
    With probability $T_{l_t}(t)/(T_{h_t}(t)+T_{l_t}(t))$, sample $h_t$;
    otherwise, sample $l_t$\; 
    \lFor{$i\in\inst$}{$T_i(t+1)\gets T_i(t) + \Ind{i\text{ is sampled}}$}
}
\end{algorithm2e}

The following theorem, whose proof appears in Appendix~\ref{app:LUCBr}, states the performance guarantees of the \LUCBr{} algorithm.
\begin{theorem}[\LUCBr{}]\label{theo:LUCBr}
With probability at least $1-\ceps\delta$, where $\ceps$ is defined in Lemma~\ref{lem: LIL}, \LUCBr{} outputs the correct answer and takes
\[
O\left(\sum_{i\in\inst}\Delta_i^{-2}\left(\log\delta^{-1}+\log N+\log\log\Delta_i^{-1}\right)\right)
\]
samples in expectation.
\end{theorem}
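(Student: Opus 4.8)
The plan is to condition everything on a single ``good'' event governed by Lemma~\ref{lem1} and to argue correctness and the expected sample bound separately. I would define the good event $\event$ on the \emph{true} partition $(\OPT,\OPTc)$, not on the empirical sets: for every $i\in\OPT$ and all $t\ge1$ the lower deviation $\mu_i-\empmean{i}{T_i(t)}\le U(T_i(t),\delta/(2K))$ holds, and for every $j\in\OPTc$ and all $t\ge1$ the upper deviation $\empmean{j}{T_j(t)}-\mu_j\le U(T_j(t),\delta/[2(N-K)])$ holds. Applying Lemma~\ref{lem1} to each of the $K$ optimal arms with parameter $\delta/(2K)$ and to each of the $N-K$ suboptimal arms with parameter $\delta/[2(N-K)]$, a union bound gives $\pr{\event}\ge 1-\ceps\delta$ (each group contributes at most $\ceps\delta/2$, using $\delta<1$). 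The point of this asymmetric split is that it matches the algorithm's radii exactly on the arms that matter: a misclassified optimal arm in $\Low_t$ is assigned radius $U(\cdot,\delta/(2K))$, and a misclassified suboptimal arm in $\High_t$ is assigned radius $U(\cdot,\delta/[2(N-K)])$.

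For correctness I would then show that on $\event$ the algorithm cannot stop with $\High_t\ne\OPT$. If it did, there would be an optimal $a\in\Low_t$ and a suboptimal $b\in\High_t$. Chaining the termination test $\muhat_{h_t}-u_{h_t}\ge\muhat_{l_t}+u_{l_t}$ through the defining extremal properties of $h_t$ and $l_t$ (namely $\muhat_b-u_b\ge\muhat_{h_t}-u_{h_t}$ and $\muhat_a+u_a\le\muhat_{l_t}+u_{l_t}$), and then through the two deviation bounds above (which apply because $b$'s and $a$'s algorithm radii coincide with their good-event parameters), yields $\mu_b\ge\mu_a$, contradicting $\mu_b\le\Mean{K+1}<\Mean{K}\le\mu_a$.

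For the count I would, for each arm $i$, define a graduation threshold $\tau_i$ as the least number of pulls making $U(\tau_i,\delta/(2K))\le \Delta_i/8$; inverting the LIL expression gives $\tau_i=O\bigl(\Delta_i^{-2}(\log\delta^{-1}+\log N+\log\log\Delta_i^{-1})\bigr)$, and since $\delta/(2K)$ and $\delta/[2(N-K)]$ differ only inside a logarithm, all radii agree up to constants, so $\sum_i\tau_i$ is exactly the target bound. The structural lemma I would prove is: on $\event$, at any non-terminating iteration at least one of $h_t,l_t$ is ungraduated. I would split on the class memberships of $h_t,l_t$. When both sit on their correct sides ($h_t\in\OPT$, $l_t\in\OPTc$) one has $\mu_{h_t}-\mu_{l_t}\ge(\Delta_{h_t}+\Delta_{l_t})/2$, so two graduated radii force the termination test to pass. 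When at least one is misclassified I would instead use that every empirical mean in $\High_t$ dominates every one in $\Low_t$: two graduated radii, pushed through the deviation bounds, force $\muhat_{l_t}>\muhat_{h_t}$; and when both critical arms lie in the same class I would bring in a third arm of the opposite class sitting across the boundary to bound the LCB of $h_t$ by $\Mean{K+1}$ and again reach a contradiction.

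The hard part will be converting this structural lemma into an \emph{expected} bound under the randomized sampling rule. The easy half is immediate: each arm $i$ receives at most $\tau_i$ pulls while ungraduated, contributing $\sum_i\tau_i$. The difficulty is the ``wasted'' pulls spent on an already-graduated critical arm whose partner is the ungraduated one. Here I would exploit the rule itself: a graduated arm $i$ is pulled only with probability $T_j/(T_i+T_j)$ against an ungraduated partner $j$, so pulls are biased toward the under-sampled arm, and a graduated optimal (resp.\ suboptimal) arm quickly loses the minimum LCB (resp.\ maximum UCB) and drops out of the critical pair after few further pulls. I would formalize this with a charging/coupling argument showing the expected number of wasted pulls is also $O(\sum_i\tau_i)$. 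Making this rigorous---controlling the balancing dynamics while the critical pair keeps changing adaptively---is the main obstacle, and is where I expect most of the work, and the $\E[\cdot]$ in the statement, to reside.
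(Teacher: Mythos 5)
Your good event, correctness argument, graduation thresholds $\tau_i$, and case analysis are all sound, and your one-sided version of $\event$ together with the third-arm trick is a legitimate (even slightly cleaner) variant: the paper's event is two-sided and its same-class case uses the upper deviation of an optimal arm, whereas your route bounds $\muhat_{h_t}$ through the misclassified arm's LCB instead, which works. The genuine gap is exactly where you flagged it: converting the structural lemma into the expected sample bound. That step is the crux of the whole proof, not a technical residue, and your structural lemma is too weak to support it. Knowing only that \emph{at least one} of $h_t,l_t$ is ungraduated leaves open the scenario you call ``wasted pulls'': the graduated partner may be the \emph{less}-sampled of the pair, in which case the randomized rule is biased toward it, and bounding how long such pairs persist really would force you to control the adaptive dynamics of the process, which you do not do.

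The missing idea is to prove the structural lemma for the \emph{critical} arm $\critical_t=\argmin_{i\in\{h_t,l_t\}}T_i(t)$, i.e.\ the less-pulled arm of the pair: on $\event$, every non-terminating round satisfies $r_{\critical_t}\ge\Delta_{\critical_t}/8$, where $r_i=U(T_i(t),\delta/(2N))$. Your case analysis already yields this strong form at essentially no extra cost: in each case one derives a lower bound on $r_{h_t}+r_{l_t}$ (or on $r_{h_t}$ alone) in terms of the gaps, and since the critical arm has been pulled fewer times it has the \emph{larger} radius, so $r_{\critical_t}\ge(r_{h_t}+r_{l_t})/2$ while $\Delta_{\critical_t}\le\Delta_{h_t}+\Delta_{l_t}$; the same-class case needs one extra step (the paper combines $r_{h_t}>\Delta_{h_t}/4$ with $r_{h_t}+r_{l_t}>\Delta_{l_t}-\Delta_{h_t}$ to get $r_{l_t}>\Delta_{l_t}/6$; your third-arm bound gives $r_{l_t}>\Delta_{l_t}/2$ directly). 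With the strong form in hand, the accounting requires no dynamics at all: charge each iteration a cost of $1$ to $\critical_t$, \emph{regardless of which arm is actually pulled}. Arm $i$ can be charged only while $T_i(t)<\tau_i$; and whenever it is charged it is the less-pulled arm, so the randomized rule pulls it with probability at least $1/2$. Hence the number $X_i$ of charges to arm $i$ is stochastically dominated by the number of fair-coin flips needed to see $\tau_i$ heads, giving $\E[X_i]\le 2\tau_i$. Since exactly one sample is drawn and exactly one unit of cost is charged per iteration, the total sample count equals $\sum_i X_i$, whose expectation is at most $2\sum_i\tau_i = O\bigl(\sum_{i\in\inst}\Delta_i^{-2}(\log\delta^{-1}+\log N+\log\log\Delta_i^{-1})\bigr)$. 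In particular, pulls of graduated arms need no separate treatment---each such pull occurs in an iteration already paid for by its ungraduated critical partner---so the coupling argument you anticipated as the main obstacle is never needed.
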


\section{\lilCLUCB{}}
    \subsection{Algorithm for \topk{}}
        In this section, we combine LIL (Lemma~\ref{lem1})
        with the \CLUCB{} algorithm~\cite{chen2014combinatorial} to obtain \lilCLUCB{},
        which achieves the same sample complexity bound as \LUCBr{} does in \topk{},
        and can be generalized to the combinatorial pure exploration problem
        (see Section~\ref{sec:lilCLUCB-gen}).
 
        In \lilCLUCB{}, the confidence radius of an arm $i$ at time $t$
        is set to $U(T_i(t),\delta/N)$.
        The algorithm starts by sampling each arm once.
        In each round, let $M_t$ be the set of $K$ arms with the highest empirical mean.
        We define the ``revised'' mean of each arm in $M_t$ as
        its empirical mean minus its confidence radius (i.e., its lower confidence bound);
        the revised mean of each arm not in $M_t$ is its upper confidence bound.
        Let $\widetilde{M}_t$ be the set of the $K$ arms with the highest revised mean.
        The algorithm terminates and outputs $M_t$ if $M_t = \widetilde{M}_t$.
        Otherwise, the algorithm samples the arm with the largest confidence radius in
            $M_t\symdiff\widetilde{M}_t$,
        the symmetric difference between $M_t$ and $\widetilde{M}_t$.
        A formal description of \lilCLUCB{} is shown in Algorithm~\ref{alg:lil-CLUCB} in Appendix~\ref{alg: lil'CLUCB}.
        The theoretical guarantee of \lilCLUCB{} is stated in Theorem~\ref{theo:CLUCB},
        whose proof is deferred to Appendix~\ref{msproof: CLUCB}.
        \begin{theorem}[\lilCLUCB{}]\label{theo:CLUCB}
            For any $\delta\in(0,1)$, with probability at least
                $1-c_\epsilon\delta^{1+\epsilon}/N^\epsilon$,
            \lilCLUCB{} returns the correct answer and takes at most
            \[
            O\left(\sum_{i\in \inst}\Delta_i^{-2}\left(\log\delta^{-1}+\log N+\log\log\Delta_i^{-1}\right)\right)
            \]
            samples.
        \end{theorem}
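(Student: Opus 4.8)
The plan is to condition the whole analysis on a single clean event and then treat correctness and the sample bound separately, following the template of the \CLUCB{} analysis in~\cite{chen2014combinatorial} but with the LIL radius of Lemma~\ref{lem: LIL}.

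\emph{The clean event.} Let $u_{i,t}:=U(T_i(t),\delta/N)$ denote the confidence radius used by the algorithm, and let $\event$ be the event that $|\empmean{i}{T_i(t)}-\mu_i|\le u_{i,t}$ holds simultaneously for every arm $i\in\inst$ and every round $t\ge 1$. First I would apply Lemma~\ref{lem: LIL} to each arm with confidence parameter $\omega=\delta/N$ (using both tails, i.e.\ the sequences $\{X_s\}$ and $\{-X_s\}$, to obtain the two-sided bound) and take a union bound over the $N$ arms, giving $\pr{\event}\ge 1-\ceps\delta^{1+\eps}/N^\eps$. All remaining claims are argued on $\event$.

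\emph{Correctness.} Suppose \lilCLUCB{} stops at round $t$, i.e.\ $M_t=\widetilde M_t$. Writing $\mutil_i$ for the revised mean ($\empmean{i}{T_i(t)}-u_{i,t}$ if $i\in M_t$, and $\empmean{i}{T_i(t)}+u_{i,t}$ otherwise), the fact that $M_t$ is exactly the set of $K$ arms with the largest revised means gives $\mutil_i\ge\mutil_j$ for every $i\in M_t$ and $j\notin M_t$, that is, $\empmean{i}{T_i(t)}-u_{i,t}\ge\empmean{j}{T_j(t)}+u_{j,t}$. Combining this with $\mu_i\ge\empmean{i}{T_i(t)}-u_{i,t}$ and $\mu_j\le\empmean{j}{T_j(t)}+u_{j,t}$ (both valid on $\event$) yields $\mu_i\ge\mu_j$. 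Hence every arm in $M_t$ has mean at least that of every arm outside $M_t$, so $M_t=\OPT$ and the output is correct.

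\emph{Sample complexity.} The crux is an exchange lemma: on $\event$, at any round $t$ where the algorithm does not stop, the sampled arm $p_t=\argmax_{i\in M_t\symdiff\widetilde M_t}u_{i,t}$ satisfies $\Delta_{p_t}=O(u_{p_t,t})$. To prove it I would use that $M_t\neq\widetilde M_t$ and $|M_t|=|\widetilde M_t|=K$ force witnesses $a\in\widetilde M_t\setminus M_t$ and $b\in M_t\setminus\widetilde M_t$; the ranking inequality $\mutil_a\ge\mutil_b$ together with $\event$ gives $\mu_b-\mu_a\le 2(u_{a,t}+u_{b,t})\le 4u_{p_t,t}$, and a short case analysis on which side of the $K/(K{+}1)$ boundary the arms $a,b,p_t$ fall then converts this into $\Delta_{p_t}=O(u_{p_t,t})$. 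Given the lemma, note that $U(\cdot,\delta/N)$ is decreasing in its first argument, so once arm $i$ has been pulled enough times that $u_{i,t}<c\,\Delta_i$ it is never sampled again; inverting $U(T,\delta/N)\ge c\,\Delta_i$ bounds the number of pulls of arm $i$ by $O\!\big(\Delta_i^{-2}(\log\delta^{-1}+\log N+\log\log\Delta_i^{-1})\big)$. Summing over all arms and adding the $N$ initial pulls yields the stated bound.

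I expect the exchange lemma to be the main obstacle: the inequality $\mu_b-\mu_a\le 4u_{p_t,t}$ is immediate, but turning it into a bound on $\Delta_{p_t}$ for the \emph{sampled} arm requires carefully handling the cases in which the confused arms $a,b$ lie on the same or opposite sides of the true top-$K$ boundary. This is exactly where the cardinality structure (the ``width'' of the constraint, in the \CLUCB{} terminology) enters and where the constant in $\Delta_{p_t}=O(u_{p_t,t})$ is pinned down.
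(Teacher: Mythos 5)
Your high-level plan coincides with the paper's: a clean event $\event$ from Lemma~\ref{lem: LIL} with parameter $\delta/N$ plus a union bound over arms, conditional correctness, a key lemma saying that any arm sampled at round $t$ has confidence radius $\Omega(\Delta_{p_t})$ (the paper states it contrapositively: once $u_{e,T_e(t)}<\Delta_e/4$, arm $e$ is never sampled again, which suffices since the radius depends only on the pull count), and finally inversion of $U$ via Lemma~\ref{lem2} and summation. Your clean-event and correctness steps are correct and essentially identical to the paper's. The gap is in the key exchange lemma, which you leave as a sketch --- and the sketch, as stated, cannot be completed.

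The failure is concrete: you propose to use only witnesses $a\in\widetilde M_t\setminus M_t$, $b\in M_t\setminus\widetilde M_t$ and the inequality $\mu_b-\mu_a\le 2(u_{a,t}+u_{b,t})\le 4u_{p_t,t}$, but this inequality is vacuous exactly when the symmetric difference lies entirely on one side of the $\OPT$ boundary. Take $K=2$, $\OPT=\{1,2\}$, $\mu_3=\mu_4$ far below $\Mean{2}$, and a round at which arms $1,3$ have been pulled many times while arms $2,4$ have been pulled only a few times: consistently with $\event$ one can have $M_t=\{1,3\}$ (arm $2$'s empirical mean dips below arm $3$'s, since $u_{2,t}$ is large) and $\widetilde M_t=\{1,4\}$ (arm $4$'s UCB exceeds arm $2$'s UCB). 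The only witness pair is $(a,b)=(4,3)$, and $\mu_3-\mu_4=0$, so your inequality carries no information, while $\Delta_3=\Delta_4=\Mean{2}-\mu_3$ is large; no case analysis over where $a,b,p_t$ sit can recover $\Delta_{p_t}=O(u_{p_t,t})$ from it. The conclusion $u_{4,t}\ge\Delta_4/2$ is still true, but proving it requires comparing arm $4$ against arm $2$, which lies in \emph{neither} $M_t$ nor $\widetilde M_t$ and hence outside your witness pool: writing $\muhat_i,\mutil_i$ for the empirical and revised means at round $t$, one has $\mu_4+2u_{4,t}>\muhat_4+u_{4,t}=\mutil_4\ge\mutil_2=\muhat_2+u_{2,t}>\mu_2$, whence $2u_{4,t}>\mu_2-\mu_4=\Delta_4$. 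This is precisely what the paper's four-case analysis does in its ``same side'' cases: it pulls in witnesses from $M_t\cap\widetilde M_t$ or from outside $M_t\cup\widetilde M_t$ (its arms $e'$ and $e''$) and chains up to three inequalities, using both the empirical-mean ordering that defines $M_t$ and the revised-mean ordering that defines $\widetilde M_t$. To complete your proof you must enlarge the witness pool and add the $\muhat$-ranking comparisons in the same way; restricted to $M_t\symdiff\widetilde M_t$ and the revised-mean ranking alone, the exchange lemma is not provable.
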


    \subsection{Generalization to Combinatorial Pure Exploration}\label{sec:lilCLUCB-gen}
        In the CPE problem, we are given a decision class $\mathcal{M}\subseteq 2^{[N]}$,
        which is a collection of feasible subsets of arms.
        The goal is to identify the set $\OPT=\argmax_{M\in\mathcal{M}}\mu(M)$,
        where $\mu(M)=\sum_{i \in M} \mu_i$ is the total mean of subset $M$.
        We allow the algorithm to access a \emph{maximization oracle}, which, on input $\mathbf{v}$,
        returns $\argmax_{M\in \mathcal{M}}\sum_{i \in M}\mathbf{v}(i)$.

        The \lilCLUCB{} algorithm can be generalized to the combinatorial pure exploration (CPE) problem.
        The generalized version of \lilCLUCB{} has a better theoretical guarantee than the \CLUCB{} algorithm~\cite{chen2014combinatorial}.
        The generalized \lilCLUCB{} algorithm is similar to \lilCLUCB{} except that
        the algorithm calls the maximization oracle
        with parameters $\muhat$ and $\mutil$ to obtain $M_t$ and $\widetilde{M}_t$.
        A more detailed description of the algorithm appears in Appendix~\ref{lil'CLUCB:generalization}.

        The hardness of an instance of CPE is captured by the complexity measure $H=\sum_{i\in \inst}\Delta_i^{-2}$, where the gap $\Delta_i$ in CPE is defined as
            \begin{equation*}
                \Delta_i = \begin{cases}
                    \mu(\OPT) - \max_{M\in\mathcal{M}, i \notin M}\mu(M),   & i\in\OPT,\\
                    \mu(\OPT) - \max_{M\in\mathcal{M}, i \in M}\mu(M),      & i\notin\OPT.
                \end{cases}
            \end{equation*}

        Another important quantity for the analysis of sample complexity in CPE is the \emph{width} of the decision class $\mathcal{M}$, denoted by $\width(\mathcal{M})$~\cite[Section~3.1]{chen2014combinatorial}. For instance, the width of matroid (which includes \topk{} as a special case) is at most 2. The next theorem, proved in Appendix~\ref{lil'CLUCB:generalization}, states the sample complexity of the generalized \lilCLUCB{} algorithm.

        \begin{theorem}[Generalized \lilCLUCB{}]
        \label{theo:GCLUCB}
        Suppose the reward distribution of each arm is $\sigma$-sub-Gaussian.
        For any $\delta\in(0,1)$ and
        decision class $\mathcal{M}\subseteq 2^{[N]}$,
        with probability at least $1-c_\epsilon\delta^{1+\epsilon}/N^\epsilon$,
        \lilCLUCB{} outputs the correct answer and takes at most
        \[
            O\left(\width(\mathcal{M})^2\sigma^2\sum_{i\in \inst}\Delta_i^{-2}\left(\log\delta^{-1}+\log N+\log\log\Delta_i^{-1}\right)\right)
        \]
        samples.
        \end{theorem}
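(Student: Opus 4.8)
The plan is to follow the standard two-phase template for confidence-bound algorithms, adapting the analysis of \CLUCB{}~\cite{chen2014combinatorial} to the time-varying LIL radius $U(t,\omega)$ (Lemma~\ref{lem1}). First I would fix the instance and define the good event $\event$ on which each arm's confidence bound holds on the side relevant to its membership in $\OPT$: a valid lower bound $\muhat_{i,T_i(t)}-U(T_i(t),\delta/N)\le\mu_i$ for every $i\in\OPT$ and a valid upper bound $\muhat_{i,T_i(t)}+U(T_i(t),\delta/N)\ge\mu_i$ for every $i\notin\OPT$, simultaneously for all $t$. Applying Lemma~\ref{lem1} to the appropriate centered $\sigma$-sub-Gaussian sequence of each arm with parameter $\omega=\delta/N$, each such bound fails with probability at most $c_\epsilon(\delta/N)^{1+\epsilon}$, so a union bound over the $N$ arms gives $\pr{\event}\ge 1-c_\epsilon\delta^{1+\epsilon}/N^\epsilon$, matching the claimed confidence. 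All remaining arguments are then carried out deterministically on $\event$.

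Second, I would establish correctness: on $\event$, whenever the algorithm halts with $M_t=\widetilde M_t$, its output equals $\OPT$. I would argue the contrapositive, following the \CLUCB{} correctness lemma. Since the oracle returns $\widetilde M_t=\argmax_{M\in\mathcal M}\mutil(M)$, where $\mutil$ assigns the lower confidence bound to arms in $M_t$ and the upper confidence bound to arms outside $M_t$, and since the appropriate side of each $\mu_i$ is sandwiched on $\event$, assuming $M_t\neq\OPT$ lets one exhibit a feasible set whose revised total strictly exceeds $\mutil(M_t)$; the oracle then returns $\widetilde M_t\neq M_t$ and the algorithm continues. Because this step never uses the specific form of the radius, it transfers essentially verbatim from~\cite{chen2014combinatorial} with $U(\cdot,\delta/N)$ substituted for their confidence radius, so I expect it to go through with only cosmetic changes.

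The substance of the proof is the sample-complexity bound, and this is where the \emph{width} enters. On $\event$, whenever the algorithm samples arm $p_t$ (the largest-radius arm in $M_t\symdiff\widetilde M_t$) the round is not terminal, so $M_t\neq\widetilde M_t$; the key structural lemma from the \CLUCB{} framework decomposes the gap $\Delta_{p_t}$ across an exchange set of size at most $\width(\mathcal M)$ and charges it to the confidence radii of the arms in that set. As $p_t$ has the largest radius among these at most $\width(\mathcal M)$ arms, this yields a lower bound of the form $\width(\mathcal M)\cdot U(T_{p_t}(t),\delta/N)\ge c\,\Delta_{p_t}$ for an absolute constant $c>0$. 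Inverting $U(t,\delta/N)$ from Lemma~\ref{lem1} for $t$ (the $\sigma^2$ inside $U$ producing the $\sigma^2$ factor, and $\omega=\delta/N$ together with the $\log\log$ term producing $\log\delta^{-1}+\log N+\log\log\Delta_i^{-1}$) then shows that arm $i$ can be sampled only while $T_i(t)$ stays below a threshold $\tau_i=O\!\left(\width(\mathcal M)^2\sigma^2\Delta_i^{-2}(\log\delta^{-1}+\log N+\log\log\Delta_i^{-1})\right)$. Summing $\tau_i$ over all arms gives the stated bound.

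The hard part will be making the width-based charging step ``$\width(\mathcal M)\cdot U(T_{p_t}(t),\delta/N)\ge c\,\Delta_{p_t}$'' rigorous for the \emph{generalized} algorithm that calls the maximization oracle with $\muhat$ and $\mutil$. I would isolate this as a deterministic lemma on $\event$, verifying that the exchange set guaranteed by the definition of $\width(\mathcal M)$ can indeed be used to lower-bound the sampled arm's radius, and carefully tracking the $\sigma$-dependence of $U$. A secondary technical point is that the LIL radius is only eventually decreasing in $t$, so ``sampled at most $O(\tau_i)$ times'' must be phrased as bounding $T_i(t)$ by the largest $t$ with $U(t,\delta/N)\ge c\,\Delta_i/\width(\mathcal M)$; since $U(t,\delta/N)\to 0$, this supremum is finite and equals $O(\tau_i)$, which closes the argument.
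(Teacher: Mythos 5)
Your proposal follows the same route as the paper's proof: a union bound over the $N$ arms with the LIL bound at confidence $\omega=\delta/N$, correctness via the oracle/exchange argument on the good event, and the sample bound via the width-charging lemma of \CLUCB{} (this is precisely \cite[Lemma~10]{chen2014combinatorial}, which the paper cites directly and which gives your inequality with $c=1/3$: an arm $i$ can be pulled only while its radius is at least $\Delta_i/(3\,\width(\mathcal{M}))$), followed by inverting $U$ via Lemma~\ref{lem2} and summing over arms. Your closing remark about the non-monotonicity of $U$ in $t$ is also the right reading of what Lemma~\ref{lem2} actually provides.

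There is, however, one concrete error: your good event $\event$ has the confidence-bound sides oriented the wrong way. You require a valid \emph{lower} bound ($\muhat_{i,T_i(t)}-U(T_i(t),\delta/N)\le\mu_i$) for $i\in\OPT$ and a valid \emph{upper} bound ($\muhat_{i,T_i(t)}+U(T_i(t),\delta/N)\ge\mu_i$) for $i\notin\OPT$, but the correctness argument needs exactly the opposite sides. Indeed, for $e'\in\OPT\setminus M_t$ the revised mean is its UCB, and the contradiction requires $\muhat_{e'}+u_{e'}>\mu_{e'}$ (a valid \emph{upper} bound for an $\OPT$ arm); for $e\in M_t\setminus\OPT$ the revised mean is its LCB, and one needs $\muhat_e-u_e<\mu_e$ (a valid \emph{lower} bound for a non-$\OPT$ arm). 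With your event the chain $\mutil_{e'}>\mu_{e'}>\mu_e>\mutil_e$ cannot be formed, so the halting argument does not go through. Moreover, the width-charging lemma you invoke is proved in \cite{chen2014combinatorial} under the \emph{two-sided} event $|\muhat_{i,j}-\mu_i|<U(j,\delta/N)$ for all arms; both directions genuinely get used for different arms in the exchange-set case analysis (this is visible already in the paper's own four-case analysis for the \topk{} version of \lilCLUCB{}). The fix is mechanical and is what the paper does: define $\event$ as the two-sided event for every arm and every sample count, obtained by applying Lemma~\ref{lem1} to both $X-\mu_i$ and $-(X-\mu_i)$; this changes the union bound only by a constant factor (an accounting point the paper itself glosses over). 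With the event corrected, the remainder of your argument is sound and coincides with the paper's proof.
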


        This improves the previous 
            $O\Bigg(\width(\mathcal{M})^2\sigma^2 H \log(N H/\delta)\Bigg)$
        upper bound in \cite{chen2014combinatorial}.
        Note that the $H\log H$ term in the previous bound
        can be much larger than the $\sum_{i\in \inst}\Delta_i^{-2}\log\log\Delta_i^{-1}$ term
        in our sample complexity bound.

\section{Experiments}\label{exp}
In this section, we investigate the practical performance of our new algorithms for \topk{}
and compare them to the state-of-the-art algorithms.
In all our experiments, the rewards are Gaussian random variables with variance $1 / 4$.%
\footnote{This is a standard setup in the literature (e.g., \cite{jamieson2014lil}).}
The confidence level, i.e., the probability of making a mistake, is set to $\nu = 0.01$.
The definition of $U(t,\omega)$ in Lemma~\ref{lem1} is also slightly modified to
	\[(1+\sqrt{\epsilon})\sqrt{\frac{2\sigma^2(1+\epsilon)}{t}\log\left[\frac{\log ((1+\epsilon)t+2)}{\omega}\right]},\]
in order to avoid negative numbers inside the $\log$.


\subsection{\topk{} Experiments}\label{sec:expr-top-k}
	\paragraph{Setup.}
	We evaluate the \topk{} algorithms on the following two sets of instances:
	\begin{itemize}
		\item ``1-sparse'' instances: $\Mean{1}=\cdots=\Mean{K}=1/2$ and $\Mean{K+1}=\cdots=\Mean{N}=0$. $K$ is set to either $2$ or $N / 2$.
		\item ``$\alpha$-Exponential'' instances:
			$\Mean{i} = \frac{N-K}{N}+\frac{K}{N}\left(\frac{K-i}{K}\right)^{\alpha}$
		for $i\leq K$ and 
			$\Mean{i} = \frac{N-K}{N}-\frac{N-K}{N}\left(\frac{i-K}{N-K}\right)^{\alpha}$
		for $i>K$. Here $\alpha = 0.3$, and $K$ is set to either $2$ or $N / 2$.
	\end{itemize}

	We test the following five algorithms:
	\LUCB{}~\cite{kalyanakrishnan2012pac},
	\lilLUCB{}~\cite{jamieson2014best},
	\LUCBp{}~\cite{simchowitz2017simulator},
	\LUCBr{} (Algorithm~\ref{alg:Rand-LUCB}) and
	\lilCLUCB{} (Algorithm~\ref{alg:lil-CLUCB}).
	For the algorithms based on the LIL confidence bounds,
	we set the parameters $\epsilon$ and $\delta$ such that the error probability
	 (see Theorem~\ref{theo:LUCBr})
	is smaller than the required confidence level $\nu = 0.01$.

	Besides the five algorithms above with provable performance guarantees
	(which we call the ``faithful versions''),
	we also evaluate the ``heuristic versions'' of these algorithms (except \LUCB{}).
	In particular, similar to previous experiments by Jamieson et al.~\cite{jamieson2014lil},
	we set the parameters to $\epsilon = 0$ and $\delta = \nu$.
	These parameters may not satisfy the theoretical conditions for the performance guarantee,
	but are sufficient for any practical usage.%
	\footnote{
		In fact, during the course of our experiment,
		we have not seen a single case where any one of these algorithms fail to
		output the correct answer. This suggests that the theoretical analysis is quite pessimistic.
	}

	\paragraph{Results.}
	The results are shown in Figure~\ref{fig:Top-K}.
	On instances with $K=2$, \LUCBr{} draws significantly fewer samples
	than the other four algorithms do, especially on instances with many arms.
	When $K=N/2$, \LUCBr{} outperforms \LUCB{} and \lilCLUCB{},
	and has a similar performance to \LUCBp{} (which coincides with \lilLUCB{} when $K = N / 2$).
	This indicates that the randomized sampling rule in \LUCBr{}
	is more efficient in practice, especially in the case when $K$ is small.
	Moreover, it is more practical than the sampling rule of \lilCLUCB.

	Our explanation is that in the asymmetric case where $K$ is small,
	if both marginal arms (i.e. $h_t$ and $l_t$ in \LUCBr{}) are sampled,
	we would take significantly more samples from each of the top $K$ arms
	than from each of the other $N - K$ arms.
	The randomized sampling rule succeeds in balancing the number of samples from
	each of those two groups of arms.
	The stopping condition is thus reached earlier when the randomized sampling rule is adopted.
	However, this idea of balancing the samples is overdone
	by the deterministic sampling rule of Gabillon et al.~\cite{gabillon2012best},
	which enforces the same number of draws from each of the marginal arms.
	As a result, the deterministic sampling rule is found to be no more efficient
	than pulling both marginal arms.
	In the symmetric case where $K=N/2$, however,
	the number of samples from each of the top $K$ arms and each of the other $N - K$ arms are roughly the same,
	so the overall effect of the random sampling rule is similar to sampling from both marginal arms.

		 
\subsection{\bestarm{} Experiments}\label{sec:expr-top-1}
	\paragraph{Setup.}
	For the special case of \bestarm{},
	we evaluate the five algorithms tested in Section~\ref{sec:expr-top-k} and the \lilUCB{} algorithm~\cite{jamieson2014lil}.
	All these algorithms are tested on the following three sets of instances in~\cite{jamieson2014lil}:
	\begin{itemize}
		\item ``1-sparse'' instances:
			$\Mean{1} = 1 / 2$ and $\Mean{i} = 0$ for $i > 1$.
		\item ``lil-Exponential'' instances:
			$\Mean{1} = 1$ and $\Mean{i} = 1-((i-1)/N)^{\alpha}$ for $i > 1$.
			We set $\alpha$ to either $0.3$ or $0.6$. 
	\end{itemize}

	\paragraph{Results.}
	The results are shown in Figure~\ref{fig:Top-1}.
	We note that the performance of the faithful version of \lilUCB{}
	is worse than those of other algorithms by an order of magnitude.
	For clearness, we exclude the presentation of it from the charts.
	We note that \LUCBr{} significantly outperforms all the other algorithms
	on the ``1-sparse'' and ``$\alpha=0.3$'' instances, especially when the number of arms gets larger.
	In the ``$\alpha=0.6$'' scenario, \LUCBr{} and \LUCBp{} have similar performances. 		      
	All these algorithms take much fewer samples than \LUCB{} and \lilCLUCB{} do, and have much better performance than \lilLUCB{} when the heuristic verions are considered.

	\begin{figure*}[htp!]
	\centering
	\subfigure[(Faithful) 1-Sparse, $K=2$]{
		\includegraphics[width = 2.65in,height=1.65in]{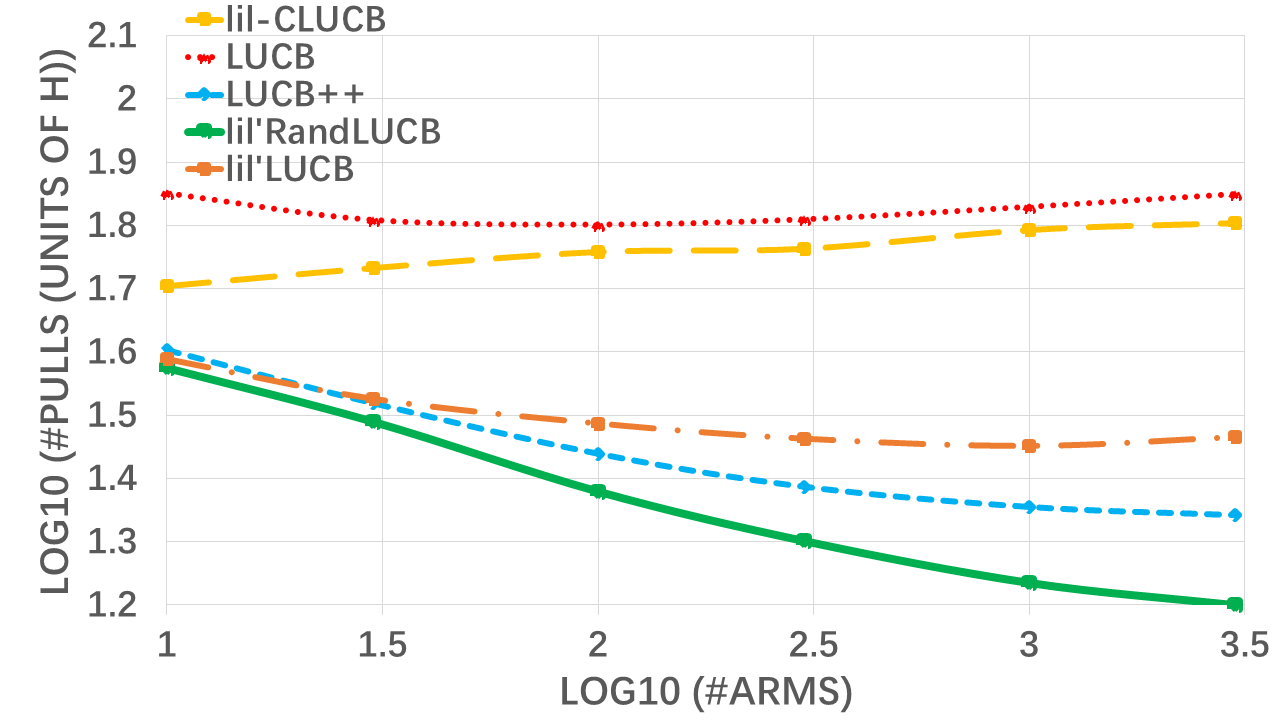}
	}
	\subfigure[(Heuristic) 1-Sparse, $K=2$]{
		\includegraphics[width = 2.65in,height=1.65in]{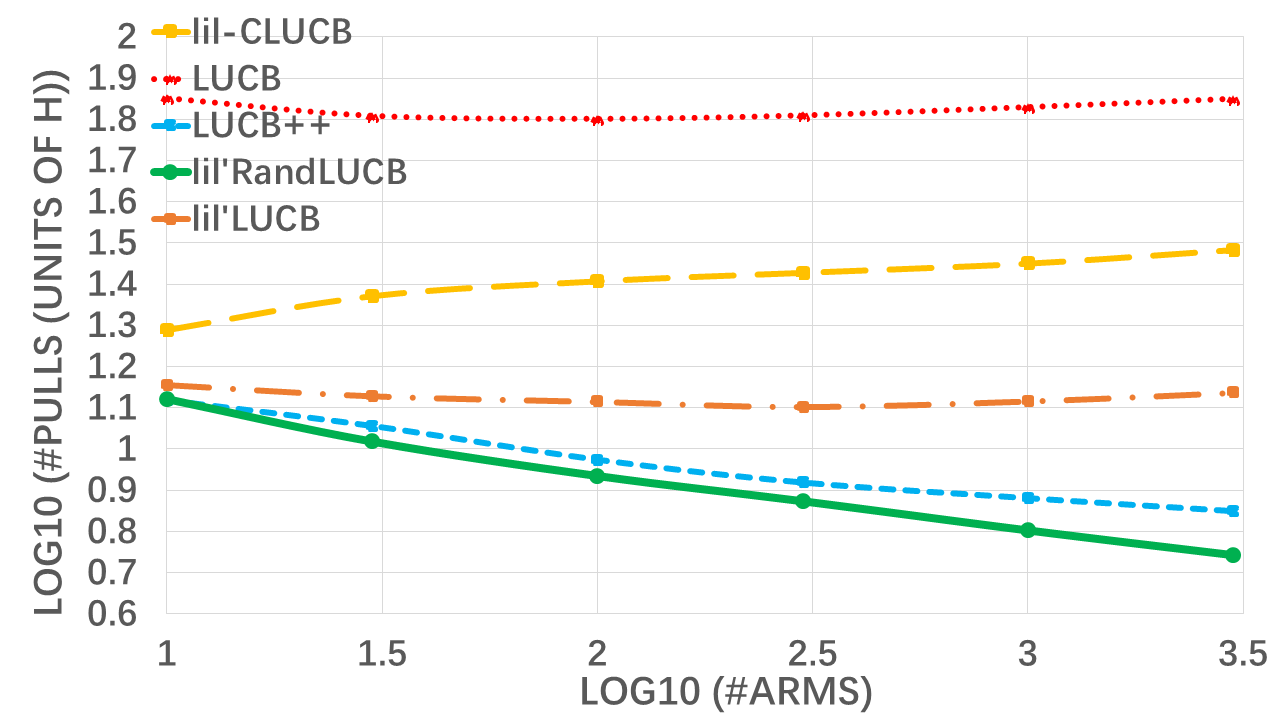}
	}
	\\
	\subfigure[(Faithful) 1-Sparse, $K=N/2$]{
		\includegraphics[width = 2.65in,height=1.65in]{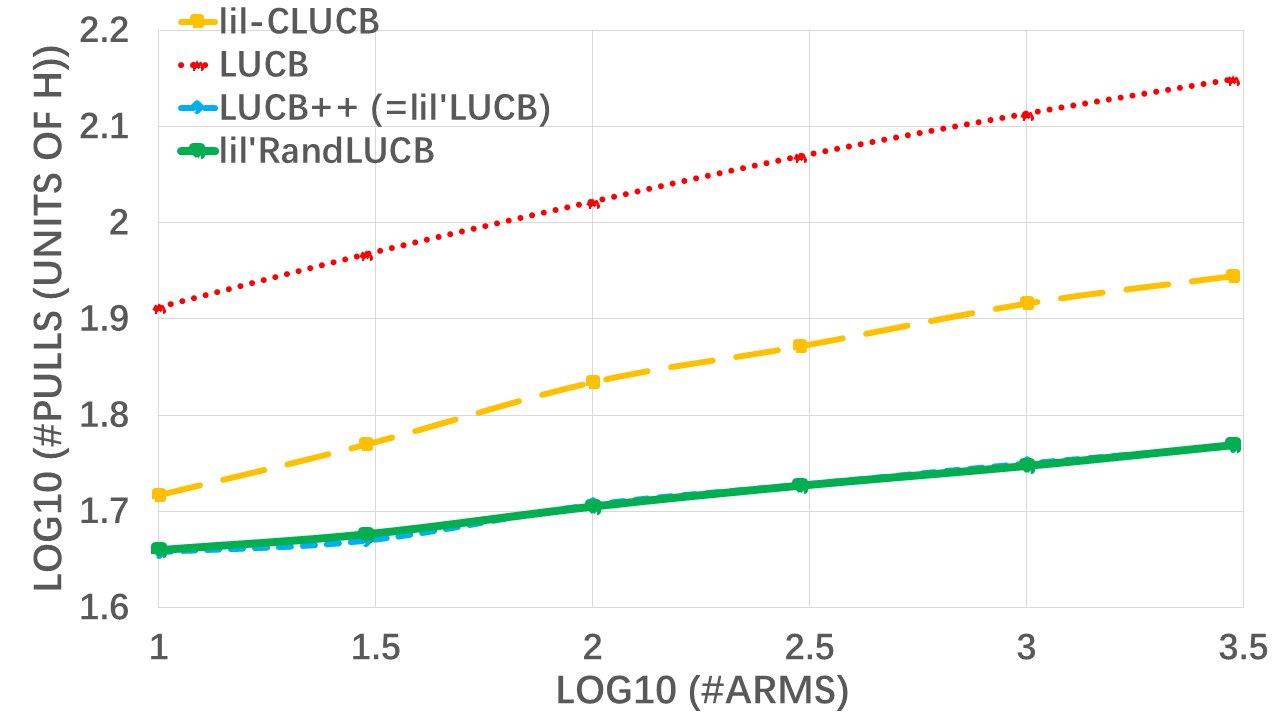}
	}
	\subfigure[(Heuristic) 1-Sparse, $K=N/2$]{
		\includegraphics[width = 2.65in,height=1.65in]{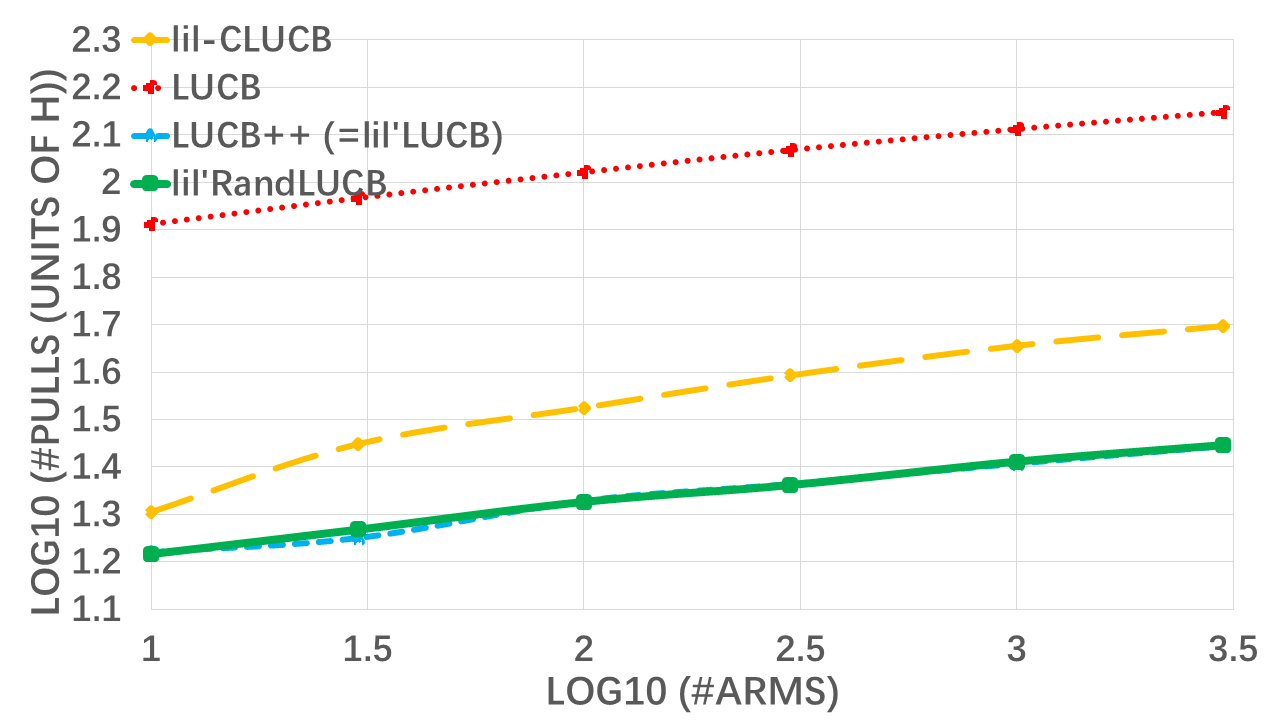}
	}
	\\
	\subfigure[(Faithful) 0.3-Exponential, $K=2$]{
		\includegraphics[width = 2.65in,height=1.65in]{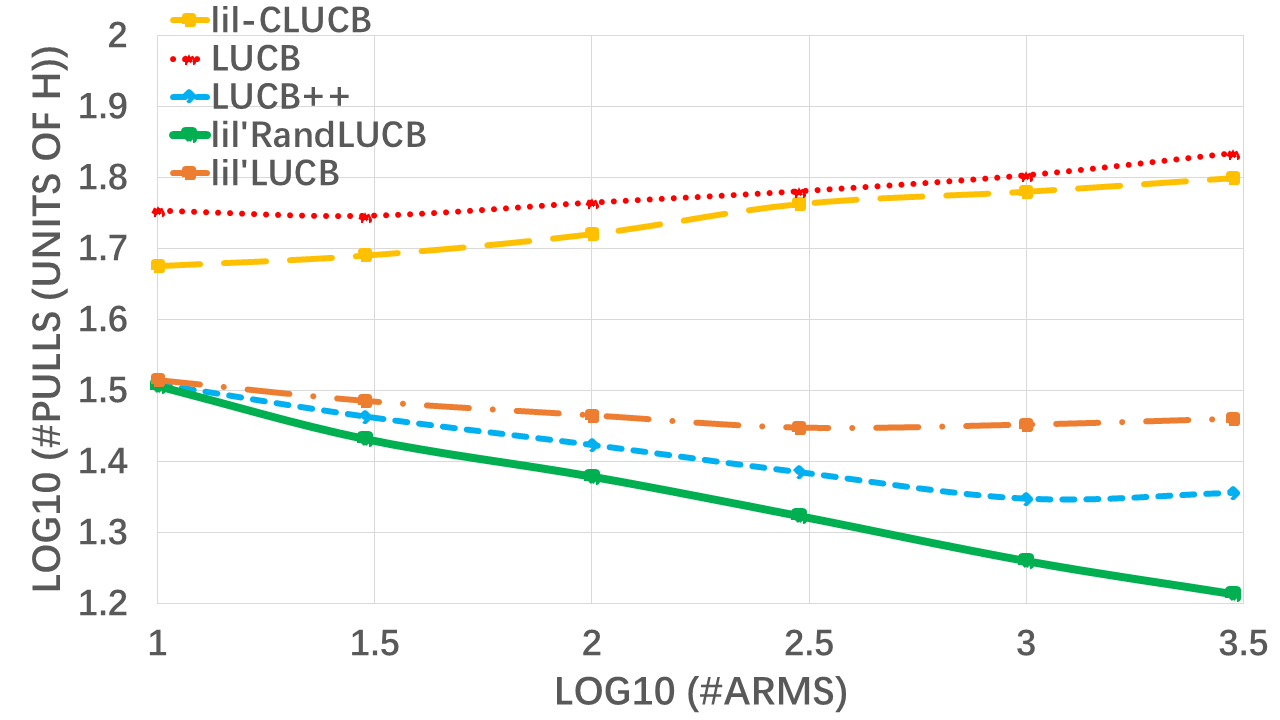}
	}
	\subfigure[(Heuristic) 0.3-Exponential, $K=2$]{
		\includegraphics[width = 2.65in,height=1.65in]{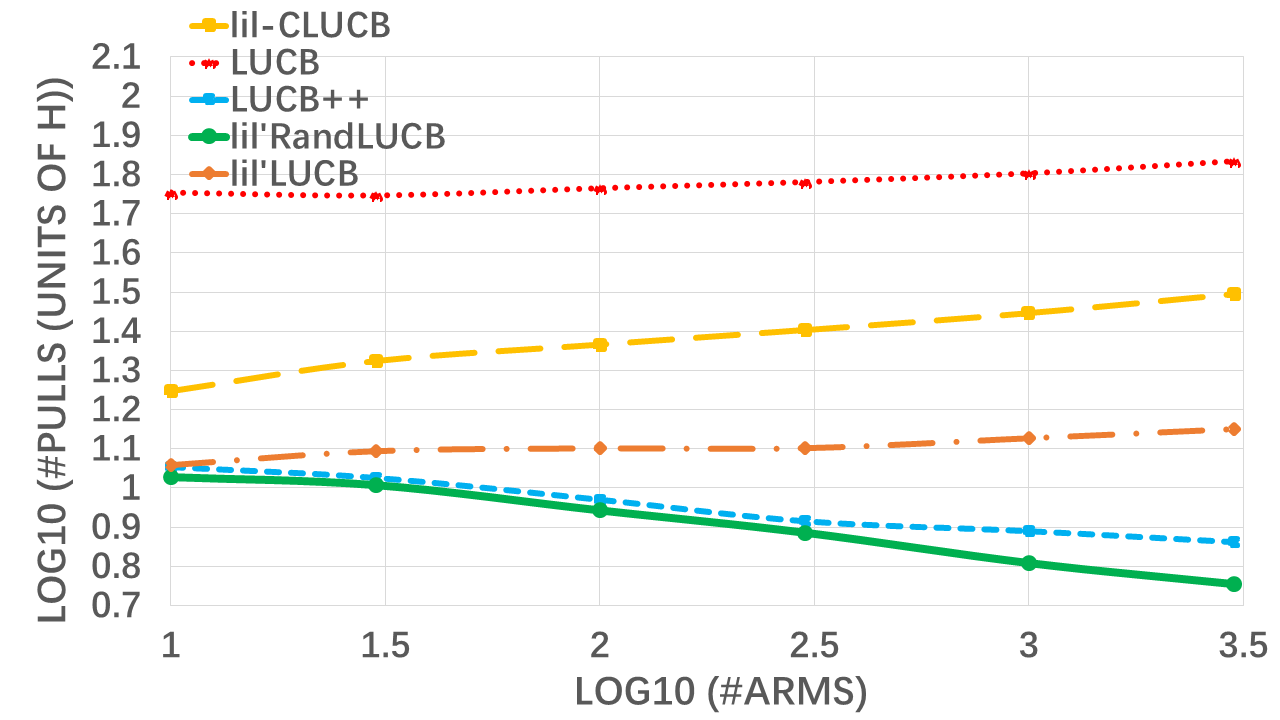}
	}
	\\
	\subfigure[(Faithful) 0.3-Exponential, $K=N/2$]{
		\includegraphics[width = 2.65in,height=1.65in]{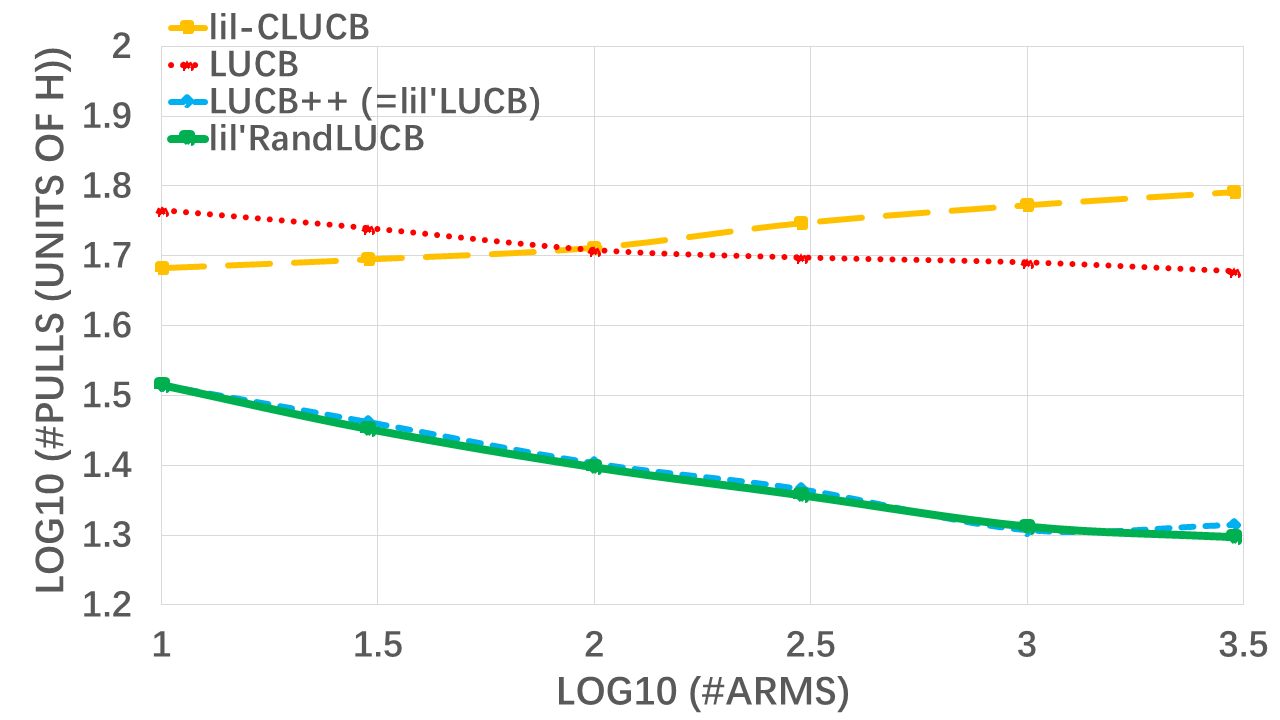}
	}
	\subfigure[(Heuristic) 0.3-Exponential, $K=N/2$]{
		\includegraphics[width = 2.65in,height=1.65in]{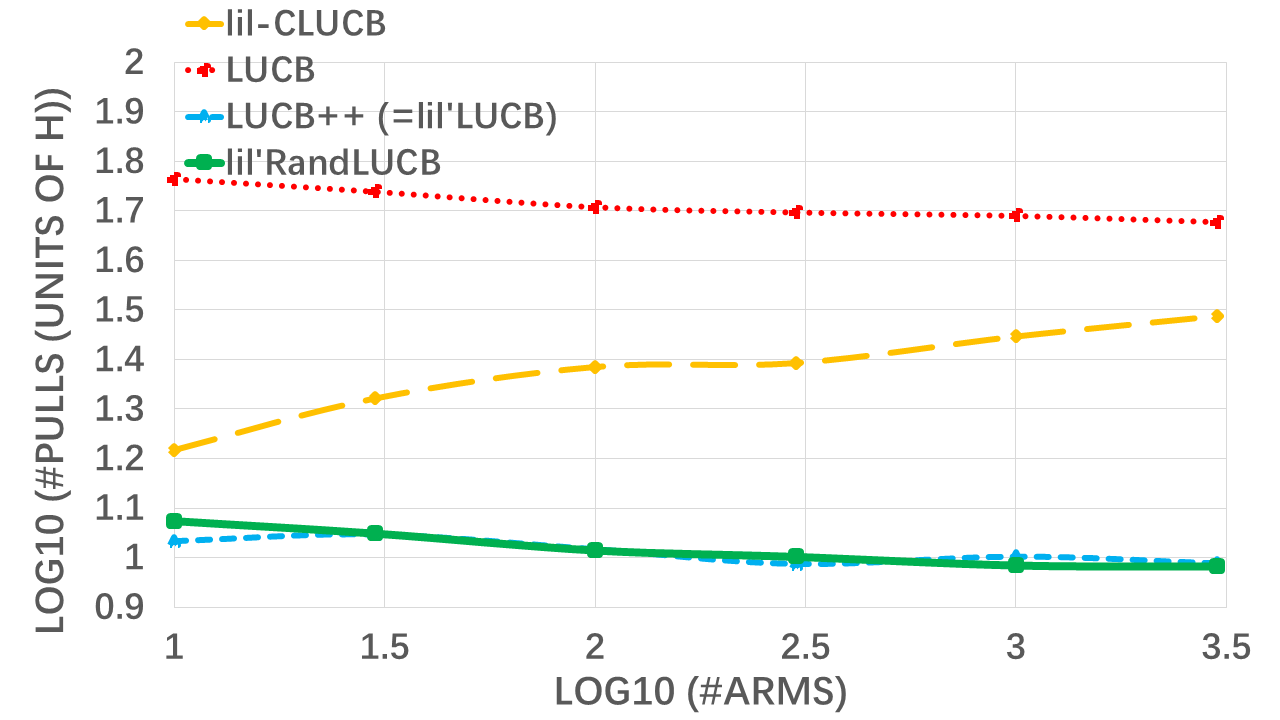}
	}
	\caption{Results for \topk{}, where $K=2$ or $K=N/2$.}
	\label{fig:Top-K}
	\end{figure*}

	\begin{figure}[htp!]
	\subfigure[(Faithful) 1-Sparse, K=1]{
		\includegraphics[width = 2.7in,height=1.65in]{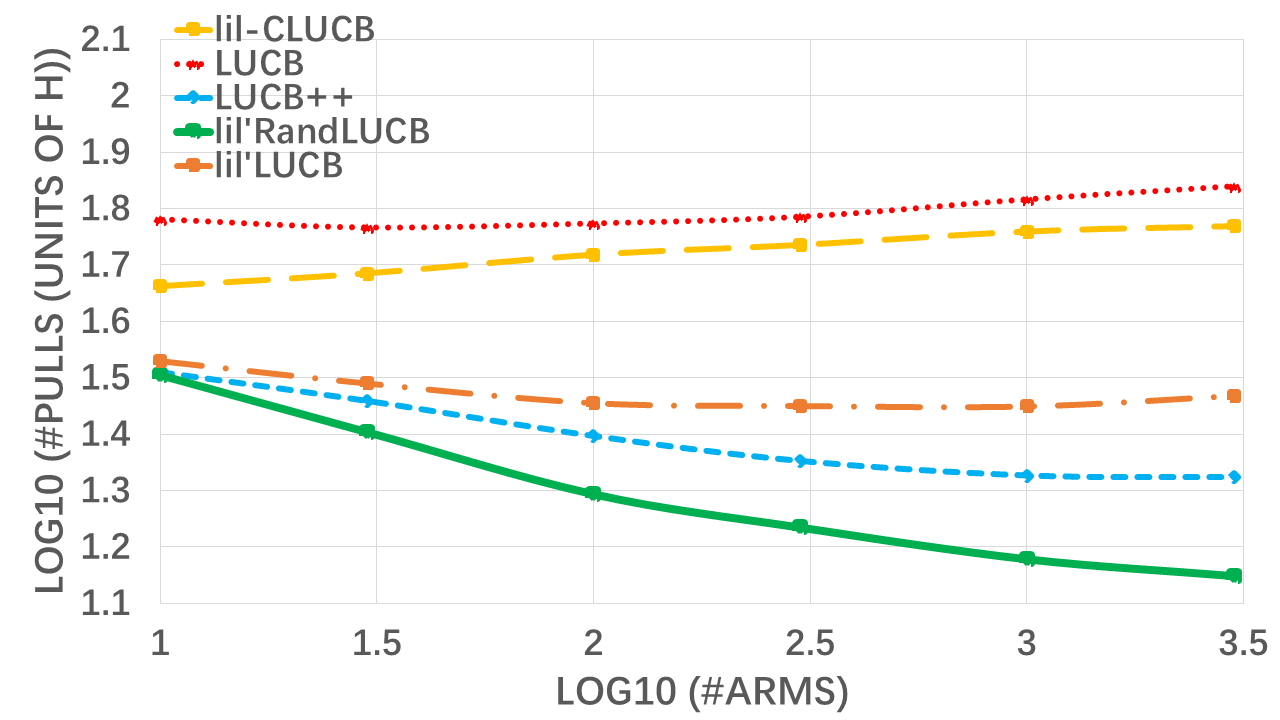}
	}
	\subfigure[(Heuristic) 1-Sparse, K=1]{
		\includegraphics[width = 2.7in,height=1.65in]{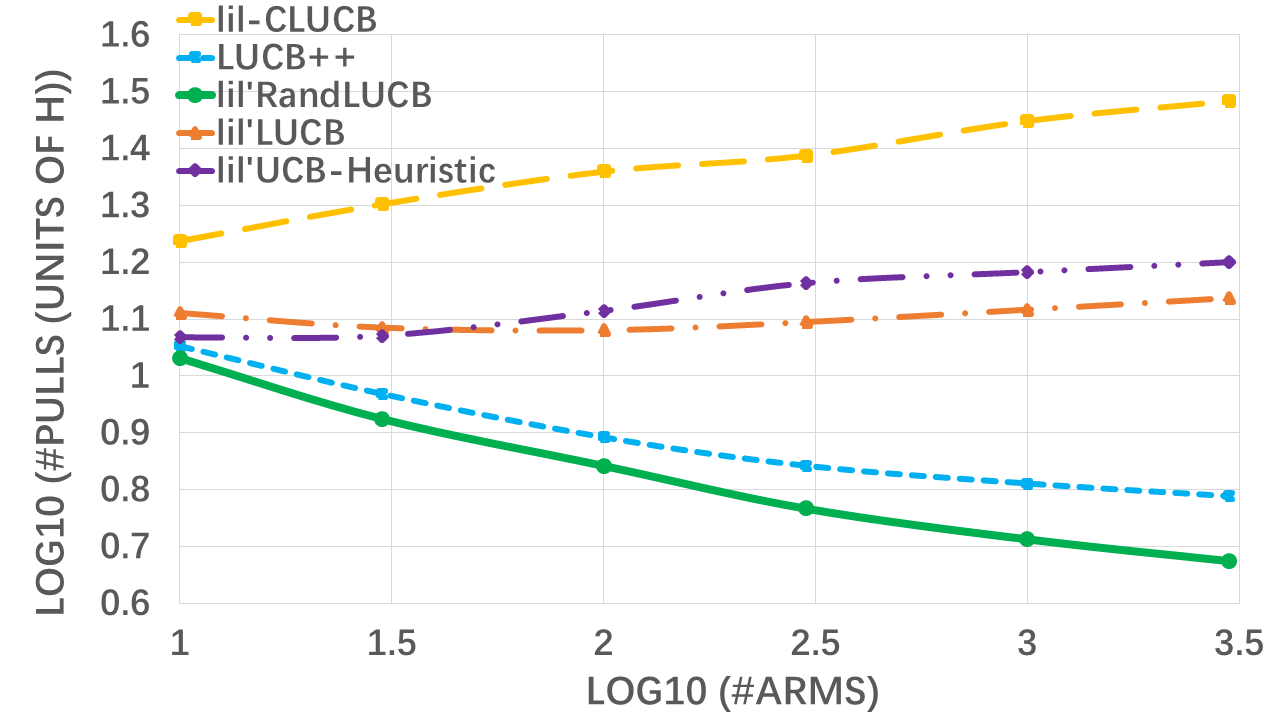}
	}
	\\
	\subfigure[(Faithful) $\alpha=0.3$, K=1]{
		\includegraphics[width=2.7in,height=1.65in]{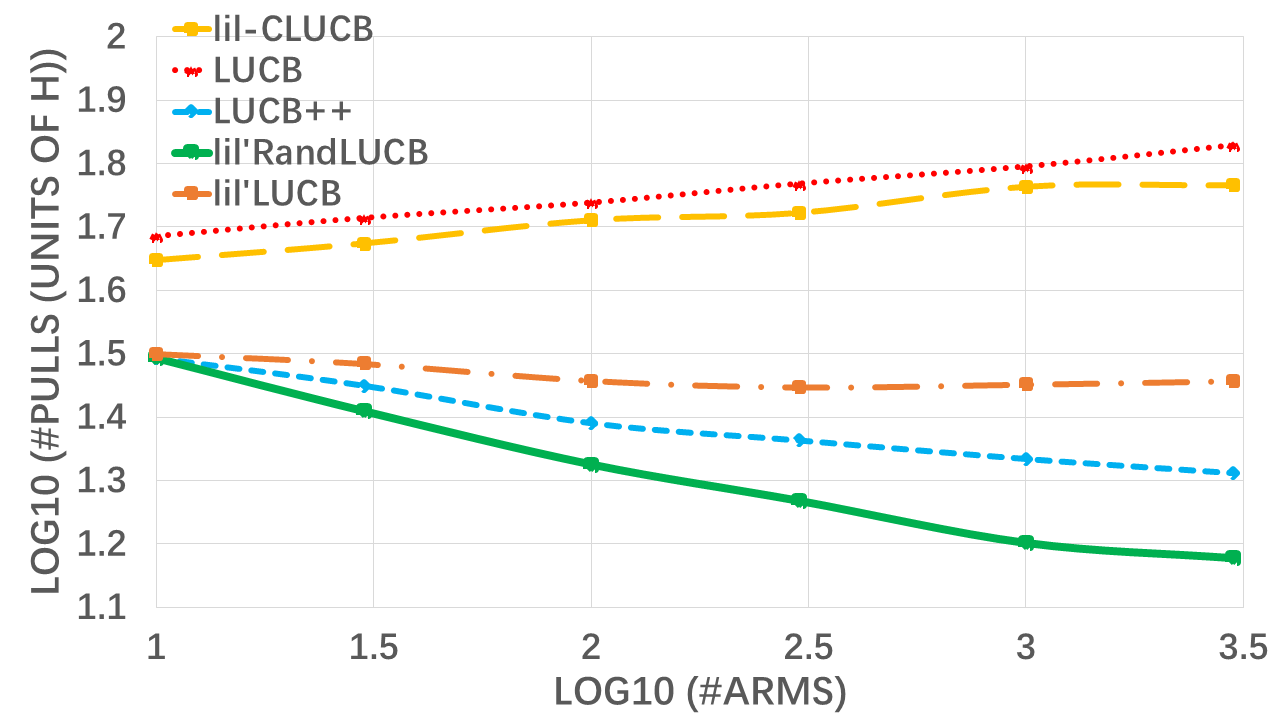}
	}
	\subfigure[(Heuristic) $\alpha=0.3$, K=1]{
		\includegraphics[width=2.7in,height=1.65in]{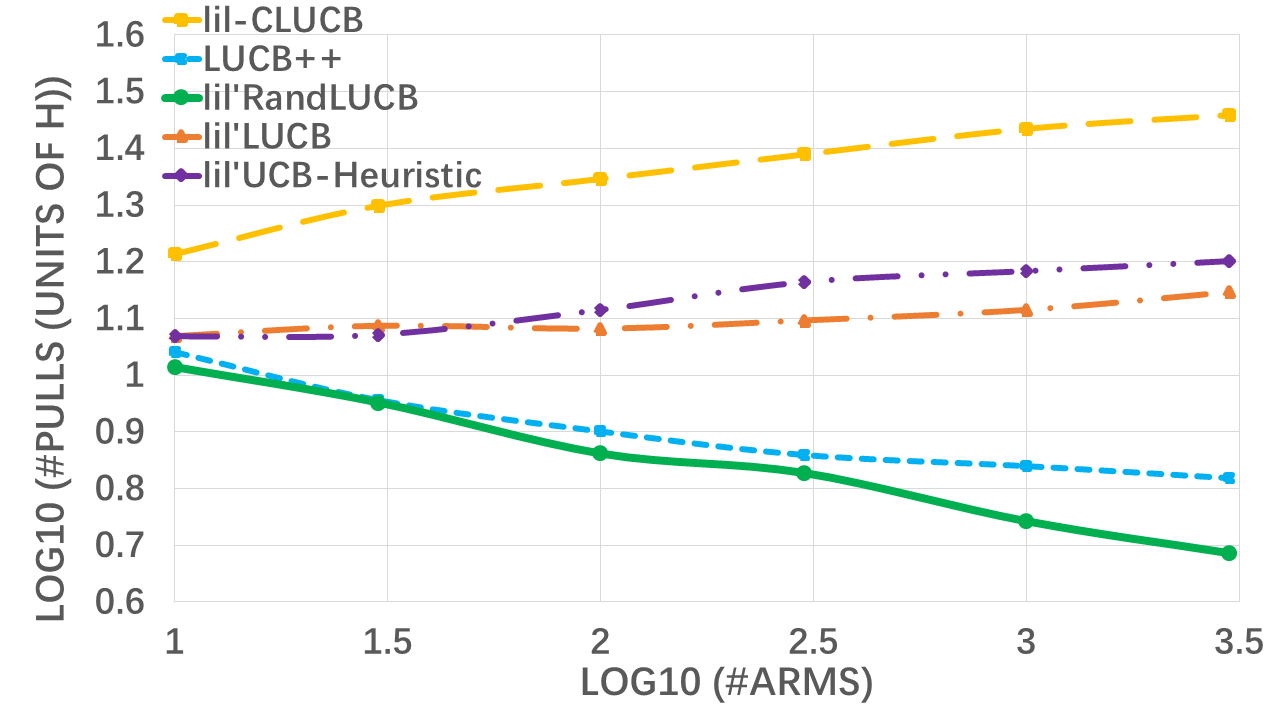}
	}
	\\
	\subfigure[(Faithful) $\alpha=0.6$, K=1]{
		\includegraphics[width=2.7in,height=1.65in]{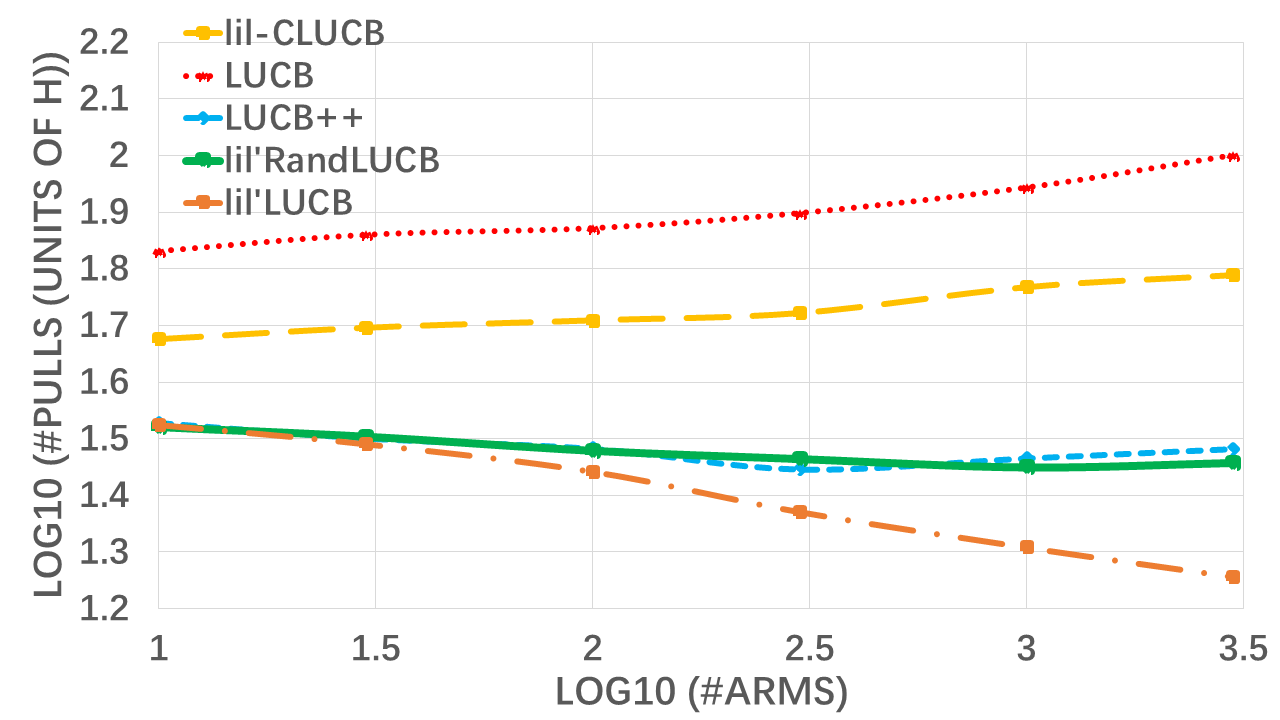}
	}
	\subfigure[(Heuristic) $\alpha=0.6$, K=1]{
		\includegraphics[width=2.7in,height=1.65in]{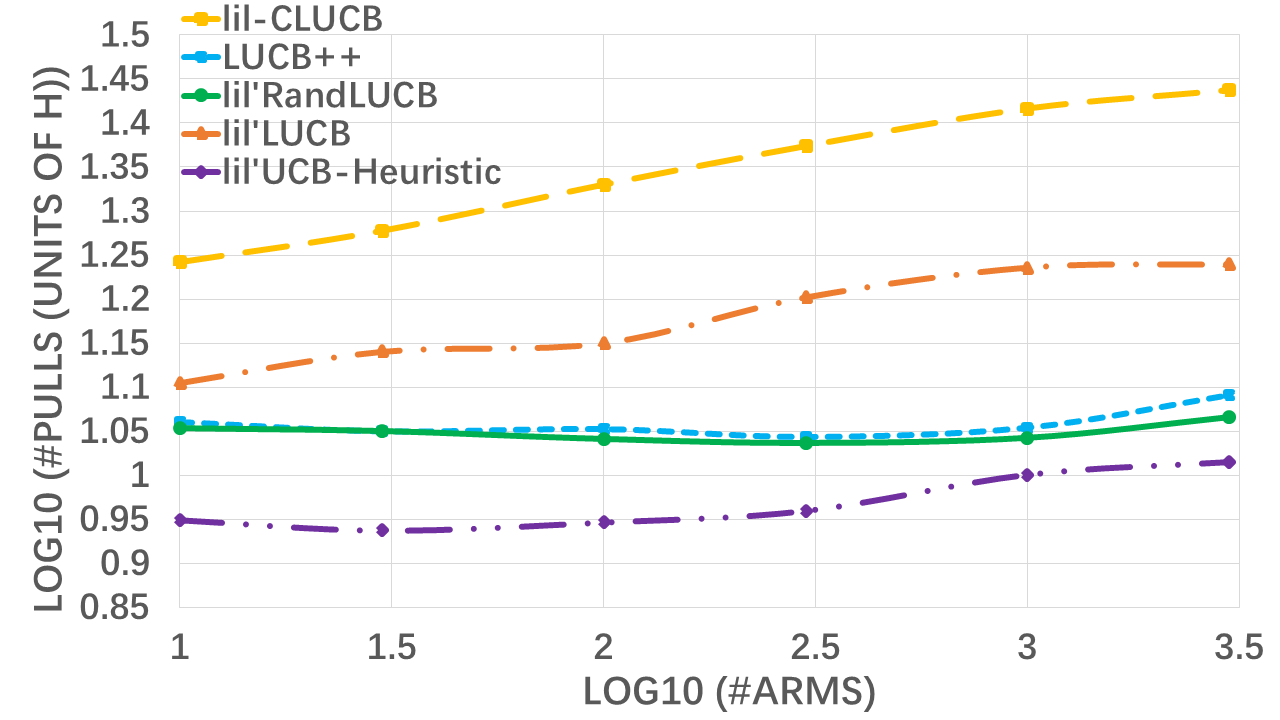}
	}
	\caption{Results for \bestarm{}.}
	\label{fig:Top-1}
	\end{figure}

\bibliography{topk}
\bibliographystyle{alpha}

\newpage

\appendix
\section{A Technical Lemma}
The following lemma follows from a direct calculation.
\begin{lemma}\label{lem2}
For $t\geq 1,\epsilon\in (0,1), c>0,0<\omega\leq 1$,
\[
\frac{1}{t}\log\Bigg(\frac{\log((1+\epsilon)t)}{\omega}\Bigg)\geq c 
\implies
t\leq \frac{1}{c}\log\Bigg(\frac{2\log((1+\epsilon)/(c\omega))}{\omega}\large\Bigg).
\]
For $t\geq 1,s\geq 3, \epsilon\in(0,1],0<\omega\leq\delta\leq e^{-e}$,
\[
\frac{1}{t}\log\Bigg(\frac{\log((1+\epsilon)t)}{\omega}\Bigg)\geq \frac{c}{s}\log\Bigg(\frac{\log((1+\epsilon)s)}{\delta}\Bigg)
\implies
t\leq\frac{s}{c}\cdot\frac{\log(2\log(\frac{1}{cw})/\omega)}{\log(1/\delta)}.
\]
\end{lemma}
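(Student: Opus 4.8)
The plan is to prove both implications by the same mechanism: \emph{exponentiate the hypothesis twice} so as to expose the competition between a quantity that is doubly-exponential in $t$ and a single logarithm of $t$, and then read off the upper bound on $t$. For the first implication, I would start from $\frac1t\log\frac{\log((1+\epsilon)t)}{\omega}\ge c$, rewrite it as $\log((1+\epsilon)t)\ge\omega e^{ct}$, and equivalently (taking one more logarithm) as
\[
ct\le\log\log((1+\epsilon)t)+\log\tfrac1\omega.
\]
Since the right-hand side grows only doubly-logarithmically in $t$, this already forces $t$ to be bounded; the work is to extract the exact closed form. The clean way to do this is to prove the intermediate claim that the hypothesis implies $\log((1+\epsilon)t)\le 2\log\frac{1+\epsilon}{c\omega}$, and then substitute this bound into the $\log\log((1+\epsilon)t)$ term above, which immediately yields $ct\le\log\frac{2\log((1+\epsilon)/(c\omega))}{\omega}$, the desired conclusion.

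To establish the intermediate claim I would set $z=(1+\epsilon)t$ and $p=c/(1+\epsilon)$, so that the hypothesis reads $pz\le\log\log z+\log\frac1\omega$, and then use the elementary inequality $(\log z)^2\le z$ — equivalently $\log\log z\le\frac12\log z$ for $z\ge e$ — together with the fact that $z\mapsto pz-\log\log z$ is eventually increasing, to rule out $z>\big(\frac{1+\epsilon}{c\omega}\big)^2$. This is the one place where the constant $2$ must be tracked precisely, and I expect it to be the main technical obstacle: the chain of elementary bounds has to be tight enough to reproduce the exact factor in the statement rather than an unspecified constant, and one also has to dispose of the easy small-$z$ regime where $\log\log z$ is not usefully positive.

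For the second implication, I would simply apply the first with the effective constant $c'=\frac cs\log\frac{\log((1+\epsilon)s)}{\delta}$ in place of $c$, since the hypothesis is exactly $\frac1t\log\frac{\log((1+\epsilon)t)}{\omega}\ge c'$; this gives $t\le\frac1{c'}\log\frac{2\log((1+\epsilon)/(c'\omega))}{\omega}$, after which it remains to simplify into the stated form $\frac sc\cdot\frac{\log(2\log(1/(c\omega))/\omega)}{\log(1/\delta)}$. Here I would use $s\ge3$ and $\epsilon\in(0,1]$ to get $\log((1+\epsilon)s)\ge1$, hence $\log\frac{\log((1+\epsilon)s)}{\delta}\ge\log\frac1\delta$, which handles the denominator, and I would use $\omega\le\delta\le e^{-e}$ (so that $\log\frac1\omega\ge\log\frac1\delta\ge e$) to absorb the stray $\log(1+\epsilon)$ factor and the doubly-logarithmic correction terms that arise from replacing $c'$ by $c$ inside the nested logarithm. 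I expect this final bookkeeping — verifying that each correction term is dominated precisely under the stated ranges of $\omega,\delta,s,\epsilon$ — to be routine but slightly tedious, and the conditions $\omega\le\delta\le e^{-e}$ and $s\ge 3$ are exactly what make it go through.
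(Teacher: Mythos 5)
The paper offers no proof of this lemma at all (it is asserted to ``follow from a direct calculation''), so the only question is whether your argument is correct. Your treatment of the \emph{first} implication is sound: rewriting the hypothesis as $ct\le\log\log((1+\epsilon)t)+\log(1/\omega)$, establishing the intermediate bound $(1+\epsilon)t\le\bigl((1+\epsilon)/(c\omega)\bigr)^2$ via $\log\log z\le\tfrac12\log z$ and monotonicity of $z\mapsto pz-\tfrac12\log z$ beyond $1/(2p)$, and substituting back does reproduce the stated constant $2$; the regimes you flag (small $z$, and very large $c$, where the hypothesis is simply unsatisfiable) can indeed be disposed of.

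The \emph{second} implication is where your proposal has a genuine gap. Write $c'=\tfrac{c}{s}L$ with $L=\log\tfrac{\log((1+\epsilon)s)}{\delta}$, so the first implication gives $t\le\tfrac{s}{cL}N_1$ with $N_1=\log\tfrac{2\log((1+\epsilon)/(c'\omega))}{\omega}$, while the target is $\tfrac{s}{c\log(1/\delta)}N_2$ with $N_2=\log\tfrac{2\log(1/(c\omega))}{\omega}$. You propose to (a) replace $L$ by $\log(1/\delta)$ in the denominator and (b) absorb the effect of replacing $c'$ by $c$ inside the nested logarithm using the conditions on $\omega,\delta$. These two moves are incompatible: once the denominator slack is discarded in (a), step (b) requires $N_1\le N_2$, i.e.\ $c'\ge(1+\epsilon)c$, i.e.\ $L\ge(1+\epsilon)s$ --- which is false in exactly the regime the lemma is used for ($s$ large, $\delta$ moderate, so $c'\ll c$). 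In that regime $\log\tfrac{1+\epsilon}{c'\omega}=\log\tfrac{1}{c\omega}+\log\tfrac{(1+\epsilon)s}{L}$ exceeds $\log\tfrac1{c\omega}$ by roughly $\log s$, a quantity that grows with $s$ and cannot be absorbed by any condition on $\omega,\delta$ alone. A correct proof must keep the slack you threw away: writing $L=\log(1/\delta)+\log\log((1+\epsilon)s)$, the needed inequality $\tfrac{N_1}{L}\le\tfrac{N_2}{\log(1/\delta)}$ is equivalent to $(N_1-N_2)\log(1/\delta)\le N_2\log\log((1+\epsilon)s)$, and this follows from the two estimates $N_1-N_2=\log\bigl(1+\tfrac{\log((1+\epsilon)s/L)}{\log(1/(c\omega))}\bigr)\le\log\log((1+\epsilon)s)$ (using $L\ge e$ and $\log(1/(c\omega))\ge1$) and $N_2\ge\log(1/\delta)$. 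Note both estimates require an implicit upper bound on $c$ (say $c\omega\le 1/e$); this is not an artifact of the proof, since the lemma as literally stated is false without it (take $c\omega=0.99$, $\omega=\delta=e^{-e}$, $\epsilon=1$, $s=10^6$: the hypothesis holds at $t=1$ but the conclusion's right-hand side is negative). In the paper's application $c=\Theta(\Delta_i^2)$ is small, so the restriction is harmless, but your proof --- like the lemma --- needs to state it.
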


\section{Missing Proofs for \LUCBr{}}\label{app:LUCBr}
In this section we present the missing proof of Theorem~\ref{theo:LUCBr}, which we restate below for convenience.

\noindent\textbf{Theorem~\ref{theo:LUCBr}} (restated) \textit{
With probability at least $1-\ceps\delta$, \LUCBr{} outputs the correct answer and takes
\[
O\left(\sum_{i\in\inst}\Delta_i^{-2}\left(\log\delta^{-1}+\log N+\log\log\Delta_i^{-1}\right)\right)
\]
samples in expectation.
}

We start with the following simple lemma, which relates the difference between the means of two arms to their gaps.
\begin{lemma}\label{lem:mean-vs-gap}
    For any arm $i\in\OPT$ and arm $j\in\OPTc$,
    	$\mu_i - \mu_j \ge (\Delta_i + \Delta_j)/2$.
    For any two arms $i,j\in\OPT$,
    	$\mu_i - \mu_j = \Delta_i - \Delta_j$.
\end{lemma}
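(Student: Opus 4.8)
The plan is to unfold the definition of the gap $\Delta_i$ from the Preliminaries and split the argument according to which of the two ``branches'' in that piecewise definition each arm falls into. Recall that for $i \in \OPT$ we have $\mu_i \ge \Mean{K}$, so $\Delta_i = \mu_i - \Mean{K+1}$, while for $j \in \OPTc$ we have $\mu_j \le \Mean{K+1}$, so $\Delta_j = \Mean{K} - \mu_j$. These two identities are the entire content I need; everything else is arithmetic.

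For the first claim, I would add the two identities to get $\Delta_i + \Delta_j = (\mu_i - \Mean{K+1}) + (\Mean{K} - \mu_j) = (\mu_i - \mu_j) + (\Mean{K} - \Mean{K+1})$. Since $\Mean{K} \ge \Mean{K+1}$ by definition of the order statistics, the term $\Mean{K} - \Mean{K+1}$ is nonnegative, hence $\Delta_i + \Delta_j \le (\mu_i - \mu_j) + (\mu_i - \mu_j) = 2(\mu_i - \mu_j)$; wait — more carefully, I want to bound $\Mean{K} - \Mean{K+1}$ above by $\mu_i - \mu_j$. Indeed $\mu_i \ge \Mean{K}$ and $\mu_j \le \Mean{K+1}$ give $\mu_i - \mu_j \ge \Mean{K} - \Mean{K+1} \ge 0$, so $\Delta_i + \Delta_j = (\mu_i - \mu_j) + (\Mean{K} - \Mean{K+1}) \le (\mu_i - \mu_j) + (\mu_i - \mu_j) = 2(\mu_i - \mu_j)$, which rearranges to $\mu_i - \mu_j \ge (\Delta_i + \Delta_j)/2$, as claimed.

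For the second claim, both $i$ and $j$ lie in $\OPT$, so both use the same branch: $\Delta_i = \mu_i - \Mean{K+1}$ and $\Delta_j = \mu_j - \Mean{K+1}$. Subtracting cancels the common $\Mean{K+1}$ term and yields $\Delta_i - \Delta_j = \mu_i - \mu_j$ directly, which is the desired equality.

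There is essentially no hard part here; the lemma is a bookkeeping statement about the definition of the gaps. The only point requiring a moment of care is the first claim, where one must correctly use the inequalities $\mu_i \ge \Mean{K}$ and $\mu_j \le \Mean{K+1}$ together with $\Mean{K} \ge \Mean{K+1}$ to control the cross term $\Mean{K} - \Mean{K+1}$; the factor of $1/2$ arises precisely because this cross term can be as large as $\mu_i - \mu_j$ itself. I would also note in passing the edge case where some arm sits exactly at the threshold (e.g.\ $\mu_i = \Mean{K} = \Mean{K+1}$), but since the piecewise definition is consistent on the overlap, this causes no difficulty.
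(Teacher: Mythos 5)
Your proof is correct and takes essentially the same route as the paper's: both unfold the piecewise definition of the gaps into the identity $\mu_i - \mu_j = \Delta_i + \Delta_j - (\Mean{K} - \Mean{K+1})$, and both prove the second claim by the same cancellation of the common $\Mean{K+1}$ term. The only cosmetic difference is how the residual $\Mean{K} - \Mean{K+1} = \Gap{K}$ is controlled: the paper bounds it by $(\Delta_i + \Delta_j)/2$ (since $\Gap{K}$ is the smallest gap), while you bound it by $\mu_i - \mu_j$ itself via $\mu_i \ge \Mean{K}$ and $\mu_j \le \Mean{K+1}$; both are immediate one-line observations.
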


\begin{proof}
    Recall that $\Mean{t}$ and $\Gap{t}$ denote the mean and the gap of
    the $t$-th largest arm, respectively.
    Then for any $i\in\OPT$ and $j\in\OPTc$,
    \[
        \mu_i - \mu_j 
    =   \Gap{i} + \Gap{j} - \Gap{K}
    \ge \Gap{i} + \Gap{j} - (\Gap{i} + \Gap{j}) / 2
    =   (\Gap{i} + \Gap{j}) / 2.
    \]
    Moreover, for any $i,j\in\OPT$,
        \[\mu_i - \mu_j = (\mu_i - \Mean{K+1}) - (\mu_j - \Mean{K+1}) = \Delta_i - \Delta_j.\]
\end{proof}

Now we prove Theorem~\ref{theo:LUCBr}.

\begin{proof}[Proof of Theorem~\ref{theo:LUCBr}]
    We define $\event$ as the event that for all arm $i\in\OPT$ and integer $j\ge 1$, it holds that $\left|\empmean{i}{j}-\mu_i\right| < U\left(j, \delta/(2K)\right)$. Moreover, for arm $i\in\OPTc$ and $j\ge 1$, $\left|\empmean{i}{j}-\mu_i\right| < U\left(j, \delta/[2(N-K)]\right)$. By Lemma~\ref{lem1} and a union bound, event $\event$ happens with probability at least
    \begin{align*}
    	& 1 - K\cdot\ceps\left(\delta/(2K)\right)^{1+\eps} - (N-K)\cdot\ceps\left(\delta/[2(N-K)]\right)^{1+\eps}\\
    	\ge& 1 - K\cdot\ceps\left(\delta/(2K)\right) - (N-K)\cdot\ceps\left(\delta/[2(N-K)]\right)\\
    	\ge& 1 - \ceps\delta.
   	\end{align*}
    It remains to prove the correctness and the sample complexity bound of \LUCBr{} conditioning on event $\event$.

    \paragraph{Conditional correctness.}
    Suppose for the purpose of contradiction the algorithm terminates at time $t$
    and returns $\High_t\ne\OPT$.
    In this case, there exists
    $i\in\High_t\cap\OPTc$ and $j\in\Low_t\cap\OPT$.
    Recall that $h_t$ is the arm in $\High_t$ with the lowest lower confidence bound 
	and $l_t$ is the arm in $\Low_t$ with the highest upper confidence bound.
    The definition of $h_t$ and event $\event$ guarantees that conditioning on $\event$,
    \[
        \mu_i > \empmean{i}{T_i(t)} - U\left(T_i(t), \delta/[2(N-K)]\right)
    \ge \empmean{h_t}{T_{h_t}(t)} - U\left(T_{h_t}(t), \delta/[2(N-K)]\right).
    \]
    Similarly,
    \[
        \mu_j < \empmean{j}{T_j(t)} + U\left(T_j(t), \delta/(2K)\right)
    \le \empmean{l_t}{T_{l_t}(t)} + U\left(T_{l_t}(t), \delta/(2K)\right).
    \]
    The stopping condition at round $t$ implies that
    \[
        \empmean{h_t}{T_{h_t}(t)} - U\left(T_{h_t}(t), \delta/[2(N-K)]\right)
    \ge \empmean{l_t}{T_{l_t}(t)} + U\left(T_{l_t}(t), \delta/(2K)\right).
    \]
    The three inequalities above together yield $\mu_i > \mu_j$, which contradicts the assumption that $i\in\OPTc$ and $j\in\OPT$.
    Thus, \LUCBr{} outputs the correct answer conditioning on event $\event$.

    \paragraph{Sample complexity bound.}
    We upper bound the sample complexity of \LUCBr{}
    by means of a \emph{charging argument}. 
    We define the \emph{critical arm} at time $t$, denoted by $\critical_t$,
    as the arm that has been pulled fewer times between $h_t$ and $l_t$,
    i.e., $\critical_t=\argmin_{i\in\{h_t, l_t\}}T_i(t)$.
    We ``charge'' the critical arm a cost of $1$, no matter whether it is actually pulled at this time step.
    It remains to upper bound the total cost that we charge each arm.
    To this end, we prove the following two claims:
    \begin{itemize}
        \item Once an arm has been sampled a certain number of times,
        it will never be critical in the future.
        \item The expected number of samples drawn from an arm
        is lower bounded by the total cost it is charged.
    \end{itemize}
    This directly gives an upper bound on the cost that we charge each arm,
    and thus an upper bound on the total sample complexity.

    \paragraph{First claim: no charging after enough samples.}
    For a fixed time step $t$, define
    	$\muhat_i = \empmean{i}{T_i(t)}$
    and
    	$r_i = U\left(T_i(t), \delta/(2N)\right).$
    Since $\delta/(2N)$ is smaller than $\delta/(2K)$ and $\delta/[2(N-K)]$,
    $r_i$ is greater than or equal to both
    	$U(T_i(t), \delta/(2K))$
    and
    	$U(T_i(t), \delta/[2(N-K)]).$
    It follows that conditioning on event $\event$,
    $|\muhat_i-\mu_i|<r_i$ holds for every arm $i$.
    
    In the following, we show that $r_{\critical_t}\ge\Delta_{\critical_t}/8$.
    In other words, let $\tau_i$ denote the smallest integer such that
        $U\left(\tau_i, \delta/(2N)\right) < \Delta_i / 8$.
    Then once arm $i$ has been sampled $\tau_i$ times,
    it will never become critical later.

    We prove the inequality in the following three cases separately.

    \textbf{Case 1. $h_t\in\OPTc$, $l_t\in\OPT$.}
    Since $h_t\in\High_t$ and $l_t\in\Low_t$, we have $\muhat_{h_t}\ge\muhat_{l_t}$. It follows that conditioning on event $\event$,
        $$\mu_{h_t} + r_{h_t}
        > \muhat_{h_t} \ge \muhat_{l_t}
        > \mu_{l_t} - r_{l_t},$$
    which implies that
        $$r_{h_t} + r_{l_t} > \mu_{l_t} - \mu_{h_t} \ge (\Delta_{h_t}+\Delta_{l_t})/2.$$
    The last step applies Lemma~\ref{lem:mean-vs-gap}.
    Recall that arm $\critical_t$ has been pulled fewer times than the other arm up to time $t$, and thus the arm has a larger confidence radius than the other arm. Then,
        $$r_{\critical_t} \ge (r_{h_t} + r_{l_t}) / 2
        \ge (\Delta_{h_t} + \Delta_{l_t}) / 4
        \ge \Delta_{\critical_t} / 4.$$

    \textbf{Case 2. $h_t\in\OPT$, $l_t\in\OPTc$.}
    Since the stopping condition of \LUCBr{} is not met, we have 
        \begin{equation*}\begin{split}
        \muhat_{h_t}-r_{h_t}
        &\le \muhat_{h_t} - U\left(T_{h_t}(t), \delta/[2(N-K)]\right)\\
        &< 	\muhat_{l_t} + U\left(T_{l_t}(t), \delta/(2K)\right)
        \le \muhat_{l_t}+r_{l_t}.
        \end{split}\end{equation*}
    It follows that conditioning on $\event$,
        $$\mu_{h_t}-2r_{h_t} < \muhat_{h_t}-r_{h_t} < \muhat_{l_t}+r_{l_t} < \mu_{l_t}+2r_{l_t},$$
    which implies that, by Lemma~\ref{lem:mean-vs-gap},
        $$r_{h_t} + r_{l_t} > (\mu_{h_t}-\mu_{l_t})/2\ge(\Delta_{h_t}+\Delta_{l_t})/4.$$
    Thus,
        $$r_{\critical_t}\ge(r_{h_t}+r_{l_t})/2\ge(\Delta_{h_t}+\Delta_{l_t})/8\ge\Delta_{\critical_t}/8.$$

    \textbf{Case 3. $h_t, l_t\in\OPT$ or $h_t, l_t\in\OPTc$.}
    By symmetry, it suffices to consider the former case.
    Since the arm $l_t$, which is among the best $K$ arms, is in $\Low_t$ by mistake,
    there must be another arm $j$ such that $j\in\OPTc\cap\High_t$.
    Recall that $h_t$ is the arm with the smallest lower confidence bound in $\High_t$. Thus we have
        \begin{equation*}\begin{split}
        \mu_{h_t} - 2r_{h_t} &< \muhat_{h_t} - r_{h_t}\\
        &\le \muhat_{h_t} - U\left(T_{h_t}(t), \delta/[2(N-K)]\right)\\
        &\le \muhat_{j} - U\left(T_{j}(t), \delta/[2(N-K)]\right)
        < \mu_j,
        \end{split}\end{equation*}
    and it follows from Lemma~\ref{lem:mean-vs-gap} that
        \begin{equation}\label{eq:case3-1}
            r_{h_t} > (\mu_{h_t}-\mu_{j}) / 2 \ge (\Delta_{h_t} + \Delta_j) / 4 \ge \Delta_{h_t}/4.
        \end{equation}
    Thus, if $\critical_t = h_t$, the claim directly holds. It remains to consider the case $\critical_t = l_t$.

    Since $\muhat_{h_t} \ge \muhat_{l_t}$, we have
        $$\mu_{l_t}-r_{l_t}<\muhat_{l_t}\le\muhat_{h_t}<\mu_{h_t}+r_{h_t}
        ,$$
    and it follows from Lemma~\ref{lem:mean-vs-gap} that
        \begin{equation}\label{eq:case3-2}
            r_{h_t} + r_{l_t} > \mu_{l_t} - \mu_{h_t} = \Delta_{l_t} - \Delta_{h_t}.
        \end{equation}
    Since we charge $\critical_t = l_t$, it holds that $r_{l_t} \ge r_{h_t}$, and thus by \eqref{eq:case3-1} and \eqref{eq:case3-2},
    \[
        r_{l_t} \ge (r_{h_t}+r_{l_t}) / 6 + 2r_{h_t} / 3
    >   (\Delta_{l_t} - \Delta_{h_t}) / 6 + \Delta_{h_t}/6
    =   \Delta_{l_t} / 6.
    \]

    This finishes the proof of the claim.

    \paragraph{Second claim: lower bound samples by costs.}
    We note that when we charge arm $\critical_t$ with a cost of $1$ at time step $t$,
    it holds that $T_{\critical_t}(t) \le (T_{h_t}(t) + T_{l_t}(t))/2$.
    According to \LUCBr{},
    arm $\critical_t$ is pulled at time $t$ with probability at least $1/2$.
    Recall that $\tau_i$ is defined as the smallest integer such that
        $U(\tau_i, \delta / (2N)) < \Delta_i / 8$.
    Let random variable $X_i$ denote that number of times
    that arm $i$ is charged before it has been pulled $\tau_i$ times.
    Since in expectation, an arm will get a sample after being charged at most twice,
    we have $\E[X_i] \le 2\tau_i$.

    By Lemma~\ref{lem2},
    \[
        \tau_i = O\left(\Delta_i^{-2}\left(\log\delta^{-1}+\log N + \log\log\Delta_i^{-1}\right)\right).
    \]
    Therefore, the sample complexity of the algorithm conditioning on event $\event$ is upper bounded by
    \[
        \sum_{i\in\inst}\E[X_i] = O\left(\sum_{i\in\inst}\tau_i\right)
    =   O\left(\sum_{i\in\inst}\Delta_i^{-2}\left(\log\delta^{-1} + \log N + \log\log\Delta_i^{-1}\right)\right).
    \]
\end{proof}

\section{lil'CLUCB Algorithm}\label{alg: lil'CLUCB}
    In this section, we give a formal description of the lil'CLUCB algorithm. The algorithm is shown in Algorithm~\ref{alg:lil-CLUCB}.

    \begin{algorithm2e}[H]
    \caption{$\lilCLUCB(\inst,K,\delta)$}
    \label{alg:lil-CLUCB}
    Initialization: Sample each arm once, and let $T_i(N)\gets 1$ for each $i\in\inst$\;
    \For{$t=N,N+1,...$} {
        $M_t \gets$ $K$ arms with the highest empirical means\;
        \lFor{$i \in \inst$} {
            $u_{i,T_i(t)} \gets U(T_i(t),\delta/N)$
        }
        \lFor{$i \in M_t$} {
            $\mutil_{i,T_i(t)} \gets \muhat_{i,T_i(t)}-u_{i,T_i(t)}$
        }
        \lFor{$i\in\inst\setminus M_t$} {
            $\mutil_{i,T_i(t)} \gets \muhat_{i,T_i(t)}+u_{i,T_i(t)}$
        }
        $\widetilde{M}_t$ $\gets$ the $K$ arms with the largest mean with respect to $\mutil$\;
        \lIf{$M_t=\widetilde{M_t}$}{\Return $M_t$}
        Sample $I_t \gets \argmax_{i\in M_t\symdiff\widetilde{M_t}}u_{i,T_i(t)}$\;
        \lFor{$i \in \inst$} {
            $T_i(t+1) \gets T_i(t) + \Ind{i = I_t}$
        }
    }
    \end{algorithm2e}

\section{Missing Proofs for \lilCLUCB{}}
\label{msproof: CLUCB}
In this section we show the proof for Theorem~\ref{theo:CLUCB}.
The theorem is a straightforward corollary of the following lemmas.
\begin{lemma}[Validity of Confidence Bound]
With probability at least $1-c_\epsilon\delta^{1+\epsilon}/N^\epsilon$, the confidence bound for each arm is valid, i.e., for each $i\in \inst$ and every $t\in\mathbb{N}$, we have
    \[|\mu_i-\muhat_{i,T_i(t)}|<U(T_i(t),\delta/N).\]
\end{lemma}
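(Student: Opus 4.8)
The plan is to prove this validity lemma by a direct application of the finite form of LIL (Lemma~\ref{lem1}) to each arm, followed by a union bound over all $N$ arms. First I would fix an arbitrary arm $i\in\inst$ and apply Lemma~\ref{lem1} with confidence parameter $\omega = \delta/N$. Since each pull of arm $i$ yields an i.i.d.\ $\sigma$-sub-Gaussian reward, the centered random variables $X_s = (\text{$s$-th reward of arm }i) - \mu_i$ satisfy the hypotheses of Lemma~\ref{lem1}. Applying the lemma gives that, with probability at least $1 - c_\epsilon(\delta/N)^{1+\epsilon}$, the one-sided bound $\frac{1}{t}\sum_{s=1}^t X_s \le U(t,\delta/N)$ holds for all $t\ge 1$, which translates to $\muhat_{i,t} - \mu_i < U(t,\delta/N)$ for all $t$. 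Applying the same argument to the negated variables $-X_s$ (also $\sigma$-sub-Gaussian) yields the matching lower bound, so both sides combine to give $|\muhat_{i,t} - \mu_i| < U(t, \delta/N)$ for all $t\ge 1$, with probability at least $1 - 2c_\epsilon(\delta/N)^{1+\epsilon}$ for this single arm.

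The second step is a union bound over the $N$ arms. Let $\event_i$ denote the failure event for arm $i$ (that the bound is violated at some $t$). We have $\pr{\event_i} \le 2c_\epsilon(\delta/N)^{1+\epsilon}$, so
\[
    \pr{\bigcup_{i\in\inst}\event_i} \le \sum_{i\in\inst}\pr{\event_i} \le N\cdot 2c_\epsilon(\delta/N)^{1+\epsilon} = 2c_\epsilon\delta^{1+\epsilon}/N^{\epsilon}.
\]
The complement of this union is precisely the event that the confidence bound is valid for every arm simultaneously, and it occurs with probability at least $1 - 2c_\epsilon\delta^{1+\epsilon}/N^{\epsilon}$.

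The only mild discrepancy to reconcile is the factor of $2$: the lemma as stated claims probability $1 - c_\epsilon\delta^{1+\epsilon}/N^{\epsilon}$, whereas the two-sided bound naively introduces a factor of $2$. This is not a real obstacle, and I would resolve it in one of two standard ways. Either one absorbs the factor into the constant by noting that $T_i(t)$ is the relevant index and using a cleaner one-sided-per-direction accounting, or—more likely matching the paper's convention—one simply allocates $\omega = \delta/N$ to capture the needed direction per arm and observes that in the downstream analysis (the correctness and sample-complexity arguments for \lilCLUCB{}) only one direction of the bound is invoked for each arm depending on whether it lies in $\OPT$ or $\OPTc$, exactly as in the \LUCBr{} proof where the event $\event$ used the $\delta/(2K)$ and $\delta/[2(N-K)]$ split. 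I expect the genuinely routine part to be the LIL application, and the only thing requiring care is this constant-factor bookkeeping so that the stated probability $1 - c_\epsilon\delta^{1+\epsilon}/N^{\epsilon}$ comes out exactly; this is a matter of how the two tails are charged, not a substantive difficulty.
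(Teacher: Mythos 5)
Your proposal matches the paper's approach exactly: the paper states this lemma without proof, as a direct corollary of Lemma~\ref{lem1} applied to each arm with $\omega=\delta/N$ followed by a union bound over the $N$ arms, which is precisely your argument. Your factor-of-$2$ concern is well spotted---the paper itself silently drops it (both here and in the analogous event bound in the proof of Theorem~\ref{theo:LUCBr}), and your second proposed fix (charging only the lower deviation for arms in $\OPT$ and the upper deviation for arms in $\OPTc$, which is all that the downstream correctness and sample-complexity lemmas actually use) is the cleanest way to recover the stated constant exactly.
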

We denote the event that confidence bounds for all arms are valid as $\event$. For the following, we always condition on the event $\event$. It is not hard to see that given $\event$ holds, the algorithm always outputs the optimal solution.
\begin{lemma}[Correctness]
Given that the event $\event$ holds, the algorithm always outputs the correct answer.
\end{lemma}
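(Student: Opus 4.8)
The plan is to argue by contradiction, mirroring the \CLUCB{} analysis but specialized to the \topk{} cardinality constraint. Suppose the algorithm terminates at some round $t$, so that $M_t = \widetilde{M}_t$, and yet $M_t \neq \OPT$. Since $|M_t| = |\OPT| = K$, the two sets must differ on each side, so I can pick witnesses $a \in \OPT \setminus M_t$ and $b \in M_t \setminus \OPT$. Because $a$ is a genuine top-$K$ arm while $b$ is not, we have $\mu_a \ge \Mean{K} > \Mean{K+1} \ge \mu_b$, hence $\mu_a > \mu_b$; this is the inequality I will eventually contradict. (Here I rely on the instance having a well-defined optimal set, i.e.\ $\Mean{K} > \Mean{K+1}$.)

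Next I would extract the crucial inequality from the stopping condition. By the construction of the revised means in Algorithm~\ref{alg:lil-CLUCB}, the arm $b \in M_t$ is assigned its lower confidence bound, $\mutil_{b,T_b(t)} = \muhat_{b,T_b(t)} - u_{b,T_b(t)}$, whereas the arm $a \notin M_t$ is assigned its upper confidence bound, $\mutil_{a,T_a(t)} = \muhat_{a,T_a(t)} + u_{a,T_a(t)}$. Since $\widetilde{M}_t$ is the set of $K$ arms with the highest revised means and the stopping condition gives $M_t = \widetilde{M}_t$, we have $b \in \widetilde{M}_t$ while $a \notin \widetilde{M}_t$; therefore the revised mean of $b$ is at least that of $a$, i.e.\ $\mutil_{b,T_b(t)} \ge \mutil_{a,T_a(t)}$.

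Finally I would invoke the validity event $\event$ to close the loop. Conditioning on $\event$, the bound $|\muhat_{i,T_i(t)} - \mu_i| < u_{i,T_i(t)}$ holds for every arm, which yields $\muhat_{b,T_b(t)} - u_{b,T_b(t)} < \mu_b$ and $\muhat_{a,T_a(t)} + u_{a,T_a(t)} > \mu_a$. Chaining these with the displayed ordering of revised means gives $\mu_b > \mutil_{b,T_b(t)} \ge \mutil_{a,T_a(t)} > \mu_a$, i.e.\ $\mu_b > \mu_a$, contradicting $\mu_a > \mu_b$. Hence no terminating round can have $M_t \neq \OPT$, so the algorithm always outputs $\OPT$ on $\event$.

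I expect no serious obstacle: the argument is a short, self-contained contradiction, and I deliberately kept it inline to avoid any display-math pitfalls. The only place demanding care is the middle step---correctly reading off from the definition of $\widetilde{M}_t$ that an in-set arm's (pessimistic, LCB-based) revised mean must dominate an out-of-set arm's (optimistic, UCB-based) revised mean---and keeping straight which confidence-bound direction attaches to arms inside versus outside $M_t$. Possible ties in the revised means, or in the empirical means when forming $M_t$, are a minor technicality that does not affect the chain of strict and non-strict inequalities above.
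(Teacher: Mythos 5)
Your proof is correct and takes essentially the same approach as the paper's: both argue by contradiction, pick witnesses on each side of $M_t \symdiff \OPT$, use $M_t = \widetilde{M}_t$ to order the revised means (in-set LCB versus out-of-set UCB), and invoke the validity event $\event$ to chain the inequalities into $\mu_b > \mu_a$, contradicting optimality. The only cosmetic difference is that you make explicit the standing assumption $\Mean{K} > \Mean{K+1}$, which the paper uses implicitly when asserting $\mu_{e'} > \mu_e$.
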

\begin{proof}\quad Suppose that the algorithm terminates at time $t$. For the purpose of contradiction, we assume that the set $M_t$ is not the same as $\OPT$. Then there exists an arm $e\in M_t\setminus \OPT$, and an arm $e'\in \OPT \setminus M_t$. Since $M_t=\widetilde{M_t}$, $e\in \widetilde{M_t}\setminus \OPT$, and $e'\in \OPT \setminus \widetilde{M_t}$. So $\mutil_{e,T_e(t)}\geq\mutil_{e',T_{e'}(t)}$. But since $\mu_{e'}>\mu_{e}$, and due to validity of confidence bound, we have that $\mutil_{e',T_{e'}(t)}=\muhat_{e',T_{e'}(t)}+u_{e',T_{e'}(t)}>\mu_{e'}>\mu_e>\muhat_{e,T_{e}(t)}-u_{e,T_{e}(t)} = \mutil_{e,T_e(t)}$, which leads to a contradiction. Therefore, given $\event$ the algorithm always outputs the correct answer.\\
\end{proof}
Now we analyze the sample complexity of \lilCLUCB{}. The next lemma shows that whenever the confidence radius of an arm is sufficiently small, then that arm will not be further sampled.
\begin{lemma}
Given that the event $\event$ holds. For any arm $e\in \inst$, suppose $u_{e,T_e(t)}< \Delta_e/4$ at time $t$, then arm $e$ will not be sampled in that round. Notice that this is equivalent to saying that arm $e$ will not be sampled again, since the confidence radius is independent of time.
\end{lemma}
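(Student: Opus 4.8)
The plan is to prove the contrapositive: assuming arm $e$ is sampled in round $t$, I will show that its confidence radius $r_e := u_{e,T_e(t)} = U(T_e(t),\delta/N)$ satisfies $r_e > \Delta_e/4$. Since $e$ is sampled, it lies in $M_t\symdiff\widetilde{M}_t$ and attains the largest radius there, so $r_i\le r_e$ for every $i\in M_t\symdiff\widetilde{M}_t$. Because the round did not terminate, $M_t\ne\widetilde{M}_t$, and since $|M_t|=|\widetilde{M}_t|=K$, both $M_t\setminus\widetilde{M}_t$ and $\widetilde{M}_t\setminus M_t$ are nonempty. I condition on $\event$ throughout, so $|\muhat_{i,T_i(t)}-\mu_i|<r_i$ for every arm $i$; abbreviate $\muhat_i:=\muhat_{i,T_i(t)}$.

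The engine of the proof is a pair of mean-comparison primitives derived from $\event$. First (empirical), if $a\in M_t$ and $b\notin M_t$, then $\muhat_a\ge\muhat_b$, and substituting the confidence bounds gives $\mu_b-\mu_a<r_a+r_b$. Second (revised), since $\widetilde{M}_t$ consists of the $K$ arms with the highest revised means, $\mutil_{a}\ge\mutil_{b}$ whenever $a\in\widetilde{M}_t$ and $b\notin\widetilde{M}_t$; expanding $\mutil$ yields an analogous bound, and crucially, when $a,b$ share the same $M_t$-side — both in $M_t$ (both revised means are lower confidence bounds) or both outside $M_t$ (both upper confidence bounds) — one radius cancels, leaving a bound of the form $\mu_a-\mu_b<2r_{\cdot}$ in which only a single radius survives. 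The strategy is, for each configuration of $e$, to locate a witness arm $w$ on the opposite side of the boundary $[\Mean{K+1},\Mean{K}]$, so that $|\mu_e-\mu_w|\ge\Delta_e$ by the definition of the gap (cf. Lemma~\ref{lem:mean-vs-gap}), and then to chain the primitives so that every surviving radius is controlled either by the cancellation above or by the maximality $r_w\le r_e$ available when $w\in M_t\symdiff\widetilde{M}_t$.

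Concretely, I split into four cases by whether $e\in M_t\setminus\widetilde{M}_t$ or $e\in\widetilde{M}_t\setminus M_t$, and whether $e\in\OPT$ or $e\in\OPTc$. When $e$ is misplaced relative to $\OPT$ already by the revised set (e.g. $e\in\OPT\setminus\widetilde{M}_t$, or $e\in\OPTc\cap\widetilde{M}_t$), the set $\widetilde{M}_t$ differs from $\OPT$, so it contains a boundary-crossing witness $w$; a single revised-mean comparison then gives $\Delta_e<4r_e$ (the cancellation supplying $\Delta_e<2r_e$ when $w$ shares $e$'s $M_t$-side, and $r_w\le r_e$ handling the other branch). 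The harder cases are those in which $e$ is misplaced only by the empirical set $M_t$, such as $e\in\OPTc\cap(M_t\setminus\widetilde{M}_t)$: here $\widetilde{M}_t$ may coincide with $\OPT$, so the natural witness $w\in\OPT\setminus M_t$ need not lie in $M_t\symdiff\widetilde{M}_t$ at all, and its radius $r_w$ is then uncontrolled.

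I expect this last situation to be the main obstacle, and I would resolve it by a two-step chaining argument that routes from $e$ to $w$ through an auxiliary arm $e'\in\widetilde{M}_t\setminus M_t$, which does lie in the symmetric difference and hence satisfies $r_{e'}\le r_e$. Comparing $e$ and $e'$ by the empirical primitive (both on opposite $M_t$-sides) gives $\mu_{e'}-\mu_e<r_e+r_{e'}\le 2r_e$, while comparing $w$ and $e'$ by the revised-mean primitive (both outside $M_t$, so same type and one radius cancels, using $w\notin\widetilde{M}_t\ni e'$) gives $\mu_w-\mu_{e'}<2r_{e'}\le 2r_e$; summing yields $\mu_w-\mu_e<4r_e$, whence $\Delta_e\le\mu_w-\mu_e<4r_e$. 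Checking that such an $e'$ always exists, treating the symmetric ``promoted'' case $e\in\OPT\cap(\widetilde{M}_t\setminus M_t)$ by the mirror-image device, and keeping track of which radius cancels in each sub-case is the only delicate bookkeeping; otherwise the argument is routine given the two primitives and the maximality of $r_e$.
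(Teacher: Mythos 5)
Your proposal is correct and follows essentially the same route as the paper's own proof: the same four-way case split on $e$'s membership in $\OPT$ and in $M_t\setminus\widetilde{M}_t$ versus $\widetilde{M}_t\setminus M_t$, the same choice of boundary-crossing witness arms, and the same chaining of empirical/revised-mean comparisons using the maximality of $u_{e,T_e(t)}$ over $M_t\symdiff\widetilde{M}_t$ (your two-step routing through an auxiliary arm in the hard cases is exactly the paper's chains involving $e'$ and $e''$ in its Cases (2) and (3), just with a slightly looser constant, $4r_e$ versus $2u_e$, which still suffices for the $\Delta_e/4$ threshold).
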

\begin{proof}
We prove this lemma by considering all possible cases seperately. Suppose arm $e$ is sampled at some time $t$, then by the sampling strategy, $e$ must be in the symmetric difference between $M_t$ and $\widetilde{M}_t$, i.e. $M_t\symdiff\widetilde{M}_t$. For simplicity of notation, we use $u_i$ to denote the confidence radius of arm $i$ at that moment. Then, $u_e\geq u_a,\forall a\in M_t\symdiff\widetilde{M}_t$. For the purpose of contradiction, we assume that $u_{e,T_e(t)}<\Delta_e/4$. We show that for each of the four cases below, we can obtain a contradiction: (1) $e\in\OPT\wedge e\in M_t\setminus\widetilde{M}_t$, (2) $e\in\OPT\wedge e\in \widetilde{M}_t\setminus M_t$, (3) $e\notin\OPT\wedge e\in M_t\setminus\widetilde{M}_t$ and (4) $e\notin\OPT\wedge e\in \widetilde{M}_t\setminus M_t$. \\
\\
\textbf{Case (1): } Suppose that $e\in\OPT\wedge e\in M_t\setminus\widetilde{M}_t$. Then there exists $e'\in\widetilde{M}_t$, s.t. $e'\notin\OPT$. If $e'\in \widetilde{M}_t\setminus M_t$, then $\mu_{e'}+2u_{e'}\geq \muhat_{e'}+u_{e'}\geq \muhat_e-u_e\geq \mu_e-2u_e$. Therefore, $\Delta_e>4\mu_e\geq \mu_e-\mu_{e'}\geq\Delta_e$, which is a contradiction. Therefore, $e'\in \widetilde{M}_t\cap M_t$. But then $\mu_{e'}\geq \muhat_{e'}-u_{e'}\geq\muhat_e-u_e\geq \mu_e-2u_e$, which again leads to a contradiction.\\
\\
\textbf{Case (2):} Suppose that $e\in\OPT\wedge e\in \widetilde{M}_t\setminus M_t$. Then there exists $e'\in M_t$, but $e'\notin\OPT$. Then $\muhat_{e'}\geq  \muhat_e$. If $e'\in M_t\setminus \widetilde{M}_t$, then $\mu_{e'}+u_{e'}\geq \mu_e-u_e$, which leads to $\Delta_e\leq \mu_e-\mu_{e'}\leq 2u_e<\Delta_e$, which is a contradiction. Therefore, it must be the case that each arm in $M_t\setminus\widetilde{M}_t$ belongs to $\OPT$, and we pick one of such arms $e''$. Then $\mu_{e'}\geq \muhat_{e'}-u_{e'}\geq \muhat_{e''}-u_{e''}\geq \muhat_e-u_{e''}\geq \mu_e-u_e-u_{e''}$. But since $u_e\geq u_{e''}$, then the above inequality implies $2u_e\geq \mu_e-\mu_{e'}$, which is a contradiction.\\
\\
\textbf{Case (3):} Suppose that $e\notin\OPT\wedge e\in M_t\setminus\widetilde{M}_t$. If there exists $e'\in\OPT$ and $e'\in \widetilde{M}_t\setminus M_t$, then $\mu_e+u_e\geq\muhat_e\geq \muhat_{e'}\geq \mu_{e'}-u_{e'}$. But $u_e\geq u_{e'}$, and therefore leads to a contradiction. So it must be the case that each arm in $\widetilde{M}_t\setminus M_t$ does not belong to $\OPT$. Therefore, there exists an arm $e'\in\OPT$ but $e'\notin M_t\cup \widetilde{M}_t$, and an arm $e''\in \widetilde{M}_t\setminus M_t$ such that $e''\notin\OPT$. Thus $\mu_e+u_e+u_{e''}\geq \muhat_e+u_{e''}\geq \muhat_{e''}+u_{e''}\geq \muhat_{e'}+u_{e'}\geq \mu_{e'}$, which leads to a contradiction $2u_e\geq \Delta_e$.\\
\\
\textbf{Cases (4):} Suppose that $e\notin\OPT\wedge e\in \widetilde{M}_t\setminus M_t$. Then there exists an arm $e'\in\OPT$ that is not in $\widetilde{M}_t$. If $e'\in M_t\setminus \widetilde{M}_t$, then by the construction of $\widetilde{M}_t$, we have that $\mu_e+2u_e\geq \muhat_e+u_e\geq \muhat_{e'}-u_{e'}\geq \mu_{e'}-2u_{e'}$. Since $u_e>u_{e'}$, it follows that $\Delta_{e}\leq \mu_{e'}-\mu_e\leq 4u_e<\Delta_e$, which is a contradiction. If $e'\notin M_t$, then $\mu_e+2u_e\geq \muhat_e+u_e\geq \muhat_{e'}+u_{e'}\geq \mu_{e'}$, which also leads to a contradiction.\\
\end{proof}

Using the preceding lemma, we can bound the number of samples taken by \lilCLUCB{}.
\begin{lemma}[Sample Complexity]
Given the event $\event$ holds, then for each arm $i$, the total number of samples taken from arm $i$ is bounded by
    $$O\left(\Delta_i^{-2}\left(\log\delta^{-1}+\log N+\log\log\Delta_i^{-1}\right)\right).$$
Therefore, the sample complexity of \lilCLUCB{} is
    $$O\left(\sum_{i\in \inst}\Delta_i^{-2}\left(\log\delta^{-1}+\log N+\log\log\Delta_i^{-1}\right)\right).$$
\end{lemma}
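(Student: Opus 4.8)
The plan is to reuse the structure already in place for \lilCLUCB{} and adapt it to the combinatorial setting, where the key technical work is in the ``sufficiently small confidence radius implies no further sampling'' lemma. First I would establish the validity of the confidence bounds: by Lemma~\ref{lem1} applied to each of the $N$ arms with parameter $\delta/N$ and a union bound, the event $\event$ that $|\muhat_{i,T_i(t)}-\mu_i|<U(T_i(t),\delta/N)$ holds simultaneously for all arms and all times $t$ occurs with probability at least $1-c_\epsilon\delta^{1+\epsilon}/N^\epsilon$. The extra $\sigma^2$ factor in the bound comes from carrying the sub-Gaussian parameter $\sigma$ through $U(t,\omega)$, which scales the confidence radius by $\sigma$. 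All subsequent arguments condition on $\event$.

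Next I would argue \emph{conditional correctness}: when the algorithm halts, the maximization oracle applied to $\muhat$ and to $\mutil$ returns the same set $M_t=\widetilde{M}_t$. If this set were not $\OPT$, the exchange argument from the CPE literature (Chen et al.~\cite{chen2014combinatorial}) produces a pair of feasible sets differing by a swap that, under $\event$, contradicts optimality of $M_t$ with respect to the revised means; this is exactly the oracle-based generalization of the correctness lemma already proved for the cardinality case.

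The main work, and the main obstacle, is the \emph{no-further-sampling} lemma in the combinatorial setting: I would show that once $u_{e,T_e(t)}<\Delta_e/(c\cdot\width(\mathcal{M}))$ for an appropriate constant $c$, arm $e$ is never again placed in $M_t\symdiff\widetilde{M}_t$ and hence never sampled. In the plain \topk{} case the proof splits into four cases by whether $e\in\OPT$ and on which side of the symmetric difference $e$ lies, each resolved by a short chain of confidence-bound inequalities. In the general CPE setting these must be replaced by an \emph{exchange-set} argument: if $e\in M_t\symdiff\widetilde{M}_t$, one uses the definition of $\width(\mathcal{M})$ to find a set of at most $\width(\mathcal{M})$ arms along which one can swap between a feasible set containing $e$ and one not containing $e$, and then the confidence radii of those swapped arms are all dominated by $u_e$ (since $e$ maximizes the radius over the symmetric difference). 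Summing the confidence-radius inequalities along this exchange path is precisely where the $\width(\mathcal{M})$ factor enters linearly, and after squaring through $U$ one obtains the $\width(\mathcal{M})^2\sigma^2$ prefactor. I would follow the structure of the analogous lemma in~\cite{chen2014combinatorial} but with the LIL radius $U(T_e(t),\delta/N)$ in place of the time-dependent radius.

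Finally, having established that arm $i$ stops being sampled once $u_{i,T_i(t)}<\Delta_i/(c\cdot\width(\mathcal{M}))$, I would invert the LIL bound using Lemma~\ref{lem2}: the smallest $\tau_i$ with $U(\tau_i,\delta/N)<\Delta_i/(c\cdot\width(\mathcal{M}))$ satisfies $\tau_i=O\!\left(\width(\mathcal{M})^2\sigma^2\Delta_i^{-2}(\log\delta^{-1}+\log N+\log\log\Delta_i^{-1})\right)$, and summing over all arms yields the stated sample complexity. The whole argument is a combinatorial lift of the four-case analysis already written for Theorem~\ref{theo:CLUCB}, so the novelty is concentrated in replacing each pairwise comparison by an exchange-path comparison controlled by $\width(\mathcal{M})$.
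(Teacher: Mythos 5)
Your plan is sound, and your final step (invert the LIL radius via Lemma~\ref{lem2} at the critical threshold, then sum over arms) is exactly the paper's, but you reach the key ``no further sampling'' fact by a genuinely different route. The paper proves this lemma entirely within the \topk{} setting: the immediately preceding lemma in the same appendix shows, by a direct four-case analysis (on whether $e\in\OPT$ and on which side of $M_t\symdiff\widetilde{M}_t$ the arm $e$ lies), that once $u_{e,T_e(t)}<\Delta_e/4$ the arm $e$ is never sampled again; since the LIL radius depends only on $T_e(t)$, inverting it at the threshold $\Delta_i/4$ and summing over arms finishes the proof. You instead prove the width-based version of that fact --- once $u_{e,T_e(t)}<\Delta_e/(c\,\width(\mathcal{M}))$ the arm is never sampled --- via the exchange-class argument behind Lemma~10 of Chen et al.~\cite{chen2014combinatorial}, which is precisely how the paper proves the \emph{generalized} Theorem~\ref{theo:GCLUCB} in Appendix~\ref{lil'CLUCB:generalization}, not how it proves this lemma. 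Your route buys generality: it establishes Theorem~\ref{theo:GCLUCB} and this lemma in one shot. The paper's route buys a self-contained, elementary argument with no exchange-class machinery and a slightly better constant ($\Delta_e/4$ versus $\Delta_e/6$ at width $2$).

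Two details in your write-up need care. First, the statement you must end with carries no $\width(\mathcal{M})^2\sigma^2$ prefactor, so to deduce it from your bound you must explicitly invoke that the \topk{} decision class is a matroid with $\width(\mathcal{M})\le 2$ and treat $\sigma$ as a constant; as written, your final paragraph conflates the generalized bound with the stated one. Second, your parenthetical justification that ``the confidence radii of those swapped arms are all dominated by $u_e$ since $e$ maximizes the radius over the symmetric difference'' is too strong: exchange sets need not be contained in $M_t\symdiff\widetilde{M}_t$, so not every swapped arm has its radius dominated by $u_e$. In the argument of~\cite{chen2014combinatorial}, the inequalities are arranged so that the only radius terms that survive belong to arms that do lie in the symmetric difference (the remaining arms enter with favorable signs), and it is that bookkeeping you must reproduce; it does go through verbatim with the time-independent LIL radius $U(T_e(t),\delta/N)$, as the paper's proof of Theorem~\ref{theo:GCLUCB} confirms.
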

\begin{proof}
Let $T_i$ be the total number of samples obtained from arm $i$. By the proceding lemma and by putting all the constant factors together, there exists some constant $C_1$, such that for each arm $i\in \inst$, such that the following inequality holds.
$$
\sqrt{\frac{1}{T_i}\log(N\log(T_i)/\delta)}\leq C_1\Delta_i
$$
Solving for this inequality, we have that there exists some constant $C_2$ such that
\[
    T_i\leq\frac{C_2}{\Delta_i^2}\log(N\log(N/\Delta_e^2\delta)/\delta)
=   O\left(\Delta_i^{-2}\left(\log\delta^{-1}+\log N+\log\log\Delta_i^{-1}\right)\right).
\]
The lemma follows by summing over all arms.
\end{proof}

\section{Generalization of \lilCLUCB{}}
\label{lil'CLUCB:generalization}
In this section, we give more details on the generalization of \lilCLUCB{} to the combinatorial pure exploration (CPE) setting. The description of our generalized \lilCLUCB{} is given in algorithm \ref{alg:lil-CLUCB-gen}. \lilCLUCB{} differs from \CLUCB{} in that it uses a time-independent confidence radius, which allows us to obtain a better sample complexity upper bound.

\begin{algorithm2e}[h]
\caption{(Generalized) $\lilCLUCB(\inst,K,\delta)$. Parameter: $\epsilon$}
\label{alg:lil-CLUCB-gen}
Initialization: sample each arm once, $T_i(N)\leftarrow 1$ for each $i\in\inst$\;
\For{$t=N,N+1,...$} {
    $\mathbf{\muhat} \gets$ the vector of empirical means for all arms\;
    $M_t$ $\leftarrow$ Oracle($\muhat$)\;
    \lFor{$i\in\inst$} {
        $u_{i,T_i(t)} \leftarrow U(T_i(t),\delta/N)$
    }
    \lFor{$i \in M_t$} {
        $\mutil_{i,T_i(t)} \leftarrow \muhat_{i,T_i(t)}-u_{i,T_i(t)}$
    }
    \lFor{$i\in\inst\setminus M_t$} {
        $\mutil_{i,T_i(t)} \leftarrow \muhat_{i,T_i(t)}+u_{i,T_i(t)}$
    }
    $\widetilde{M}_t$ $\leftarrow$ Oracle($\mutil$)\;
    \lIf{$M_t=\widetilde{M_t}$} {
        \Return $M_t$
    }
    Sample $I_t\leftarrow\argmax_{i\in M_t\symdiff\widetilde{M_t}}u_{i,T_i(t)}$\;
    \lFor{$i\in\inst$} {
        $T_i(t+1)\leftarrow T_i(t) + \Ind{i = I_t}$ for each $i\in\inst$
    }
}
\end{algorithm2e}

Now we give the proof for Theorem~\ref{theo:GCLUCB}, which is restated below.

\noindent\textbf{Theorem~\ref{theo:GCLUCB}} (restated) \textit{
    Suppose the reward distribution of each arm is $\sigma$-sub-Gaussian.
    For any $\delta\in(0,1)$ and
    decision class $\mathcal{M}\subseteq 2^{[N]}$,
    with probability at least $1-c_\epsilon\delta^{1+\epsilon}/N^\epsilon$,
    \lilCLUCB{} outputs the correct answer and takes at most
    \[
        O\left(\width(\mathcal{M})^2\sigma^2\sum_{i\in \inst}\Delta_i^{-2}\left(\log\delta^{-1}+\log N+\log\log\Delta_i^{-1}\right)\right)
    \]
    samples.
}
\begin{proof}
    By Lemma~\ref{lem1} and the stopping condition of \lilCLUCB{},
    \lilCLUCB{} outputs the correct answer
    with probability at least $1-c_\epsilon\delta^{1+\epsilon}/N^\epsilon$.
    By~\cite[Lemma~10]{chen2014combinatorial},
    once the confidence radius of an arm $i$ is smaller than
        $\Delta_i/(3\width(\mathcal{M}))$,
    that arm will no longer be pulled.
    Combined with Lemma~\ref{lem2},
    the total number of samples obtained from arm $i$ is at most
    \[
        O\left(\width\left(\mathcal{M}\right)^2\sigma^2\Delta_i^{-2}\left(\log\delta^{-1}+\log N+\log\log\Delta_i^{-1}\right)\right).
    \]
    Summing up over all arms proves the theorem.
\end{proof}

\end{document}